\newcolumntype{+}{>{\global\let\currentrowstyle\relax}}
\newcolumntype{^}{>{\currentrowstyle}}
\newcommand{\minimize}[2]{\ensuremath{\underset{\substack{{#1}}}{\operatorname{minimize}}\;\;#2}}
\newcommand{\scal}[2]{{\left\langle{{#1}\mid{#2}}\right\rangle}}
\newcommand{\menge}[2]{\big\{{#1}~\big |~{#2}\big\}} 
\newcommand{\HH}{\ensuremath{{\mathcal H}}}
\newcommand{\RR}{\ensuremath{\mathbb{R}}}
\newcommand{\RP}{\ensuremath{\left[0,+\infty\right[}}
\newcommand{\RPP}{\ensuremath{\left]0,+\infty\right[}}
\newcommand{\RX}{\ensuremath{\left]-\infty,+\infty\right]}}
\newcommand{\NN}{\ensuremath{\mathbb N}}
\newcommand{\bell}{\boldsymbol{\ell}}
\newcommand{\pinf}{\ensuremath{{+\infty}}}
\newcommand{\prox}{\ensuremath{\operatorname{prox}}}
\newcommand{\epi}{\ensuremath{\operatorname{epi}}}
\newcommand{\subto}{\ensuremath{\operatorname{s.t.}}}
\DeclarePairedDelimiter{\parens}{(}{)}
\newtheorem{theorem}{Theorem}[section]
\newtheorem{remark}[theorem]{Remark}
\newtheorem{proposition}[theorem]{Proposition}
\newcommand{\argmax}{\operatorname*{argmax\;}}
\newcommand{\citet}[1]{\cite{#1}}
\newcommand{\citep}[1]{\cite{#1}}
\begin{document}

\title{A Proximal Approach for Sparse Multiclass SVM\thanks{This work was supported by the CNRS IMAG'in OPTIMISME project}}

\author{G. Chierchia\thanks{G. Chierchia (Corresponding author) and B. Pesquet-Popescu are with  T\'el\'ecom ParisTech/Institut T\'el\'ecom, LTCI, UMR CNRS 5141, 75014 Paris, France (e-mail: first.last@telecom-paristech.fr).}\and Nelly Pustelnik\thanks{N. Pustelnik is with the Laboratoire de Physique de l'ENS Lyon, CNRS UMR 5672, F69007 Lyon, France. Phone: +33 4 72 72 86 49, E-mail: \texttt{nelly.pustelnik@ens-lyon.fr}.}\and  Jean-Christophe Pesquet\thanks{J.-C. Pesquet is  with the Universit{\'e} Paris-Est, LIGM, CNRS-UMR 8049, 77454 Marne-la-Vall{\'e}e Cedex 2, France. Phone: +33 1 60 95 77 39, E-mail: \texttt{jean-christophe.pesquet@univ-paris-est.fr}.} \and B. Pesquet-Popescu$^*$}

\maketitle

\begin{abstract}%
	Sparsity-inducing penalties are useful tools to design multiclass support vector machines (SVMs). In this paper, we propose a convex optimization approach for efficiently and exactly solving the multiclass SVM learning problem involving a sparse regularization and the multiclass hinge loss formulated by \citet{Crammer_K_2001_j-mach-learn-res_algorithmic_imk}. We provide two algorithms: the first one dealing with the hinge loss as a penalty term, and the other one addressing the case when the hinge loss is enforced through a constraint. The related convex optimization problems can be efficiently solved thanks to the flexibility offered by recent primal-dual proximal algorithms and epigraphical splitting techniques. Experiments carried out on several datasets demonstrate the interest of considering the exact expression of the hinge loss rather than a smooth approximation. The efficiency of the proposed algorithms w.r.t.\ several state-of-the-art methods is also assessed through comparisons of execution times.
\end{abstract}
	
\section{Introduction}
Support vector machines (SVMs) have gained much popularity in solving large-scale classification problems. As a matter of fact, many applications considered in the literature deal with a large amount of training data or a huge (even infinite) number of classes \citep{Martin_D_2005_j-naasp_speech_rbm, Tsochantaridis_I_2005_j-mach-learn-res_large_mms, Huang_F_2006_p-ieee-cvpr_large_sls, Laptev_2008_p-ieee-cvpr_learning_rha, Joachims_T_2009_j-mach-learn_cutting_pts}. Consequently, the major difficulty encountered in this kind of applications stems from the computational cost. The SVM learning problem is classically solved by using standard Lagrangian duality techniques \citep{Cortes_C_1995_j-mach-learn_supp_vect_net, Crammer_K_2001_j-mach-learn-res_algorithmic_imk}. This approach brings in several advantages, such as the kernel trick \citep{Aizerman_1964_kernel_trick}, or the possibility to break the problem down into a sequence of smaller ones \citep{Platt1995,Blondel_icpr2014}. Some works also proposed to approximate the dual problem using cutting plane approaches, in order to address scenarios with thousands or even an infinite number of classes \citep{Tsochantaridis_I_2005_j-mach-learn-res_large_mms, Joachims_T_2009_j-mach-learn_cutting_pts}.

In some applications, however, only a small number of training data is available. This is undoubtedly true in medical contexts, where the goal is to classify a patient as being ``healthy", ``contaminated'', or ``infected'', but the verified cases of infected patients might be just a few. In such applications, the lack of training data may lead to the so-called \textit{overfitting} problem,  eventually leading to a prediction which is too strongly tailored to the particularities of the training set and poorly generalizes to new data. 

A common solution to prevent overfitting consists of introducing a sparsity-inducing regularization in order to perform an implicit \emph{feature selection} that gets rid of irrelevant or noisy features. In this respect, the $\boldsymbol{\ell}_1$-norm and, more generally, the $\boldsymbol{\ell}_{1,p}$-norm regularization have attracted much attention over the past decade \citep{Krishnapuram_2005_j-pami-multi_logit, Duchi2009_boost-sparsity, Yuan2010_comparisonL1, Rakotomamonjy2011_j-tnn_lplq_penalty_multitask, Bach_F_2012_j-ftml_optimization_sip, Rosasco2013_j-mlr_nonparam_sparse_reg, Tuia2014_j-tgrs_auto_feat_learn_svm, Villa_2014_prox_methods_lasso}. However, when a sparse regularization is introduced, the dual approach does no longer  yield a simple formulation. 
Therefore, SVMs with sparse regularization lead to a nonsmooth convex optimization problem which is challenging. The main objective of this paper is to \emph{exactly} and \emph{efficiently} solve the multiclass SVM learning problem for convex regularizations. To this end, we propose two algorithms based on a primal-dual proximal method \citep{Vu_B_2011_j-acm_spl_adm,Condat_L_2012} and a novel epigraphical splitting technique \citep{Chierchia_G_2014_SIVP_epigraphical_ppt}. In addition to more detailed theoretical developments, this paper extends our preliminary work \citep{Chierchia2014_ICASSP_sparse_SVM} by providing a new algorithm, and a larger number of experiments including comparisons with state-of-the-art methods for different types of database.

\subsection{Related work} 
The use of sparse regularization in SVMs was firstly proposed in the context of binary classification. The idea traces back to the work by~\citet{Bradley1998_ICML_featureselection}, who demonstrated that the $\bell_1$-norm regularization can effectively perform ``feature selection'' by shrinking small coefficients to zero. Other forms of regularization have also been studied, such as the $\bell_0$-norm~\citep{Weston2002_j-mach-learn_L0SVM}, the $\bell_p$-norm with $p > 0$ \citep{Liu2007_LpSVM}, the $\bell_\infty$-norm \citep{Zou2008_stat_LinfSVM}, and the combination of $\bell_0$-$\bell_1$ norms \citep{Liu2007_L0L1SVM} or $\bell_1$-$\bell_2$ norms \citep{Wang06_stat_L1L2SVM}. A different solution was proposed by~\citet{Tan_M_2010_p-icml_learning_ssvm}, who reformulated the SVM learning problem by using an indicator vector (its components being either equal to 0 or 1) to model the active features, and solved the resulting combinatorial problem by convex relaxation using a cutting-plane algorithm. More recently, \citet{Laporte2014_j-ieee-tnnls_sparseSVM} proposed an accelerated algorithm for $\bell_1$-regularized SVMs involving the \emph{square hinge} loss. They also proposed a procedure for handling nonconvex regularization (using the reweighted $\bell_1$-minimization scheme by \citet{Candes_2008_j-four-anal-appl_enhancing_srl}), showing that nonconvex penalties lead to similar prediction quality while using less features than convex ones.

Binary SVMs can be turned into multiclass classifiers by a variety of strategies, such as the \emph{one-vs-all} approach \citep{Cortes_C_1995_j-mach-learn_supp_vect_net, Rifkin2004_one-vs-all}. While these techniques provide a simple and powerful framework, they cannot capture the correlations between different classes, since they break a multiclass problem into multiple \emph{independent} binary problems. \citet{Crammer_K_2001_j-mach-learn-res_algorithmic_imk} therefore proposed a direct formulation of multiclass SVMs by generalizing the notion of margins used in the binary case. A natural idea thus consists of equipping muticlass SVMs with sparse regularization. A simple example is the $\bell_1$-regularized multiclass SVM, which can be addressed by linear programming techniques \citep{Wang2007_stat_L1MSVM}. In multiclass problems however, feature selection becomes more complex than in the binary case, since multiple discriminating functions need to be estimated, each one with its own set of important features. For this reason, mixed-norm regularization has recently attracted much interest due to its ability to impose group sparsity \citep{Yuan2006_stat_modelselection, Meier2008_stat_grouplasso, Duchi2009_boost-sparsity, Obozinski2010_joint-selection}. In the context of multiclass SVMs, \citet{Zhang2008_variable-selection} proposed to deal with the $\bell_{1,\infty}$-norm regularization by reformulating the SVM learning problem in terms of linear programming. However, they validated their method on small-size problems, indicating that the linear reformulation may be inefficient for larger-size ones. More recently, \citet{Blondel_2013_j-mach-learn_block_coord_SMV} proposed an algorithm to handle $\bell_{1,2}$-regularized SVMs involving a smooth loss function. While their method is efficient and can handle other convex regularizations, it does not solve rigorously the multiclass SVM learning problem, possibly leading to performance limitations.

\subsection{Contributions} The algorithmic solutions proposed in the literature to deal with sparse multiclass SVMs are either cutting-plane methods~\citep{Tan_M_2010_p-icml_learning_ssvm}, proximal algorithms \citep{Laporte2014_j-ieee-tnnls_sparseSVM, Blondel_2013_j-mach-learn_block_coord_SMV}, or linear programming techniques \citep{Wang2007_stat_L1MSVM,Zhang2008_variable-selection}. However, both cutting-plane methods and proximal algorithms have been employed to find an approximate solution, while linear programming techniques may not scale well to large datasets. In this paper, we propose a novel approach based on proximal tools and recent epigraphical splitting techniques \citep{Chierchia_G_2014_SIVP_epigraphical_ppt}, which allow us to exactly solve the sparse multiclass SVM learning problem through an efficient primal-dual proximal method \citep{Vu_B_2011_j-acm_spl_adm,Condat_L_2012}.

\subsection{Outline} 
The paper is organized as follows. In Section~\ref{s:pa}, we formulate the multiclass SVM problem with sparse regularization, in Section~\ref{s:algo} we provide the proximal tools needed to solve the proposed problem, and in Section~\ref{s:nr} we evaluate our approach on three standard datasets and compare it to the methods proposed by~\citet{Blondel_2013_j-mach-learn_block_coord_SMV}, \citet{Laporte2014_j-ieee-tnnls_sparseSVM}, \citet{Zhang2008_variable-selection}, and~\citet{Krishnapuram_2005_j-pami-multi_logit}.

\subsection{Notation} 
$\Gamma_0(\RR^X)$ denotes the set of proper, lower semicontinuous, convex functions from the 
Euclidean space $\RR^X$ to $\RX$. 
The epigraph of $\psi\in \Gamma_0(\RR^X)$ is the nonempty closed convex subset of $\RR^X\times \RR$ defined as $\epi \psi = \menge{(y,\zeta) \in \RR^X\times \RR}{\psi(y)\le \zeta}$. For every $x \in \RR^X$, the subdifferential of $\psi$ at $x$ is $\partial \psi(x) = \menge{u\in\RR^X}{(\forall y\in\RR^X)\; \scal{y-x}{u}+\psi(x)\leq \psi(y)}$. Let $C$ be a nonempty closed convex subset of $\RR^X$, then $\iota_C$ is the indicator function of $C$, equal to $0$ on $C$ and $\pinf$ otherwise.


\section{Sparse Multiclass SVM}\label{s:pa}
A multiclass classifier can be modeled as a function $d\colon \RR^N \to \{1,\dots,K\}$ that predicts the class $k \in \{1,\dots,K\}$ associated to a given observation $u\in \RR^N$ (e.g.\ a signal, an image or a graph). This predictor relies on $K$ different \emph{discriminating functions} $D_k \colon \RR^N \mapsto \RR$ which, for every $k \in \{1,\dots,K\}$, measure the likelihood that an observation belongs to the class $k$. Consequently, the predictor selects the class that best matches an observation, i.e.
\begin{equation*}
d(u) \in \argmax_{k\in \{1,\dots,K\}} D_k(u).
\end{equation*}
In supervised learning, the discriminating functions are built from a set of $L$ input-output pairs
\begin{equation*}
S = \big\{ (u_\ell,z_\ell) \in \RR^N \times \{1,\dots,K\} \;|\; \ell=\{1,\dots,L\}\big\},
\end{equation*}
and they are assumed to be linear in some feature representation of inputs \citep{Cover_1965_j-ieee-ec_geom_stat_prop}. The latter assumption leads to
the following form of the discriminating functions:
\begin{equation}\label{eq:D_k}
D_k(u) = \phi(u)^\top x^{(k)} + b^{(k)},
\end{equation}
where $\phi \colon \RR^N \mapsto \RR^M$ denotes a mapping from the input space onto an arbitrary feature space, 
and $(x^{(k)}, b^{(k)})_{1\le k\le K}$ denote the parameters to be estimated. For convenience, we concatenate the latter ones into a single vector ${\rm x}\in \RR^{(M+1)K}$
\begin{equation*}
{\rm x} = 
\left[
\begin{aligned}
&x^{(1)}\\
&b^{(1)}\\
&\;\;\,\vdots\\
&x^{(K)}\\
&b^{(K)}\\
\end{aligned}
\right]
\begin{aligned}
&\left.\vphantom{\begin{aligned}x^{(1)}\\b^{(1)}\end{aligned}}\right\} {\rm x}^{(1)}\\
&\vphantom{\vdots}\\
&\left.\vphantom{\begin{aligned}x^{(K)}\\b^{(K)}\end{aligned}}\right\} {\rm x}^{(K)}
\end{aligned}
\end{equation*}
and we define the function $\varphi \colon \RR^N \mapsto \RR^{M+1}$ as
\begin{equation*}
\varphi(u) = \left[\phi(u)^\top \;\; 1 \right]^\top,
\end{equation*}
so that \eqref{eq:D_k} can be shortened to $D_k(u)= \varphi(u)^\top {\rm x}^{(k)}$.

\subsection{Background}
The objective of learning consists of finding the vector ${\rm x}$ such that, for every $\ell \in \{1,\dots,L\}$, the input-output pair $(u_\ell,z_\ell) \in S$ is correctly predicted by the classifier, i.e.,
\begin{equation*}
z_\ell = \argmax_{k\in \{1,\dots,K\}} \varphi(u_\ell)^\top {\rm x}^{(k)}.
\end{equation*}
By the definition of $\operatorname{argmax}$, the above equality holds if\footnote{To simplify the notation, we shorten $k \in \{1,\dots,K\}\setminus\{z_\ell\}$ to $k\ne z_\ell$.}
\begin{equation*}
(\forall\ell \in \{1,...,L\})\quad \max_{k\ne z_\ell}\; \varphi(u_\ell)^\top ( {\rm x}^{(k)} - {\rm x}^{(z_\ell)} ) < 0,
\end{equation*}
or, equivalently,
\begin{equation}\label{eq:ideal_cond2}
(\forall\ell \in \{1,...,L\})\quad \max_{k\ne z_\ell}\; \varphi(u_\ell)^\top ( {\rm x}^{(k)} - {\rm x}^{(z_\ell)} ) \le -\mu_\ell,
\end{equation}
where, for every $\ell \in \{1,\dots,L\}$, $\mu_\ell$ is a positive scalar. Unfortunately, this constraint has no practical interest for learning purposes, as it becomes infeasible when the training set is not fully separable. Multiclass SVMs overcome this issue by introducing the notion of \emph{soft margins}, which consists of adding a vector of slack variables $\xi = (\xi^{(\ell)})_{1\le\ell\le L}$ into~\eqref{eq:ideal_cond2}:
\begin{equation}\label{eq:svm_constraint}
\left\{
\begin{aligned}
\!\!&(\forall\ell \in \{1,...,L\})\; \max_{k\ne z_\ell}\; \varphi(u_\ell)^\top ( {\rm x}^{(k)} - {\rm x}^{(z_\ell)} ) \le \xi^{(\ell)}-\mu_\ell,\\
\!\!&(\forall\ell \in \{1,...,L\})\; \xi^{(\ell)} \ge 0,
\end{aligned}
\right.
\end{equation}
The multiclass SVM learning problem is thus obtained by adding a quadratic regularization \citep{Crammer_K_2001_j-mach-learn-res_algorithmic_imk}, yielding\footnote{Note that the regularization does not involve the offsets $(b^{(k)})_{1\le k\le K}$.}
\begin{multline}\label{eq:slack_SVM}
\minimize{({\rm x},\xi)\in \RR^{(M+1)K}\times \RR^L}\; \sum_{k=1}^K\|x^{(k)}\|_2^2 + \lambda\sum_{\ell=1}^L \xi^{(\ell)} \quad\subto\\
\left\{
\begin{aligned}
\!\!&(\forall\ell \in \{1,...,L\})\quad \max_{k\ne z_\ell}\; \varphi(u_\ell)^\top ( {\rm x}^{(k)} - {\rm x}^{(z_\ell)} ) \le \xi^{(\ell)}-\mu_\ell,\\
\!\!&(\forall\ell \in \{1,...,L\})\quad \xi^{(\ell)} \ge 0,
\end{aligned}
\right.
\end{multline}
where $\lambda \in \RPP$.
Note that the linear penalty on the slack variables allows us to minimize the violation of  constraint~\eqref{eq:ideal_cond2}. By using standard convex analysis \citep{Boyd_S_2004_book_con_o}, the above problem can be equivalently rewritten without slack variables as
\begin{equation}\label{eq:SVM_hinge}
\operatorname*{minimize}_{{\rm x}\in \RR^{(M+1)K}}\; \sum_{k=1}^K\|x^{(k)}\|_2^2 + 
\lambda\sum_{\ell=1}^L \max\big\{0, \mu_\ell+\max_{k\ne z_\ell}\; \varphi(u_\ell)^\top ( {\rm x}^{(k)} - {\rm x}^{(z_\ell)} )\big\}.
\end{equation}
Hereabove, the second term is called \emph{hinge loss} when $\mu_\ell\equiv 1$.

\subsection{Proposed approach}
We extend Problem~\eqref{eq:SVM_hinge} by replacing the squared $\boldsymbol{\ell}_2$-norm regularization with a generic function $g \in \Gamma_0(\RR^{(M+1)K})$. Moreover, we rewrite the hinge loss in an equivalent form by introducing, for every $\ell \in \{1,\dots,L\}$, the linear operator $T_\ell \colon \RR^{(M+1)K} \mapsto \RR^{K}$ defined as
\begin{equation*}
\big(\forall{\rm x}\in\RR^{(M+1)K}\big)\quad T_\ell \, {\rm x} = \big[ \, \varphi(u_\ell)^\top ( {\rm x}^{(k)} - {\rm x}^{(z_\ell)} ) \,\big]_{1\le k \le K},
\end{equation*}
the vector $r_\ell = (r_\ell^{(k)})_{1\le k\le K}\in \RR^K$ defined as
\begin{equation*}
(\forall k\in\{1,\dots,K\})\qquad r_\ell^{(k)} = 
\begin{cases}
0, & \quad\textrm{if $k= z_\ell$},\\
\mu_\ell, & \quad\textrm{otherwise},
\end{cases}
\end{equation*}
and the function $h_\ell\colon\RR^{K}\mapsto \RR$ defined, for every $y^{(\ell)} = (y^{(\ell,k)})_{1\le k\le K} \in \RR^{K}$, as
\begin{equation}\label{eq:fun_h}
h_\ell(y^{(\ell)}) = \max_{1 \le k\le K} y^{(\ell,k)} + r_\ell^{(k)},
\end{equation}
so that the following holds
\begin{equation*}
h_\ell(T_\ell {\rm x}) = \max\big\{0, \mu_\ell+\max_{k\ne z_\ell}\; \varphi(u_\ell)^\top ( {\rm x}^{(k)} - {\rm x}^{(z_\ell)} )\big\}.
\end{equation*}
We aim at solving the following convex optimization problems:
\begin{itemize}
\item \textsl{regularized formulation}
\begin{equation}\label{eq:regularized_SVM_hinge}
\minimize{{\rm x} \in \RR^{(M+1)K}}\; g({\rm x}) + \lambda \sum_{\ell=1}^L h_\ell(T_\ell \,{\rm x}),
\end{equation}

\item \textsl{constrained formulation}
\begin{equation}\label{eq:constrained_SVM_hinge}
\minimize{{\rm x} \in \RR^{(M+1)K}}\; g({\rm x})  \quad\subto \quad \sum_{\ell=1}^L h_\ell(T_\ell \,{\rm x}) \le \eta,
\end{equation}
\end{itemize}
where $\lambda$ and $\eta$ are positive constants. Note that, by Lagrangian duality, the above formulations are equivalent for some specific values of $\eta$ and $\lambda$. The interest of considering the constrained formulation lies in the fact that $\eta$ may be easier to set, since it is directly related to the properties of the training data.

As mentioned in the introduction, the regularization term $g$ is chosen so as to promote some form of sparsity. A popular example is the $\boldsymbol{\ell}_1$-norm, as it ensures that the solution will have a number of coefficients exactly equal to zero, depending on the strength of the regularization \citep{Bach_F_2012_j-ftml_optimization_sip}. Another example is given by the mixed $\bell_{1,p}$-norm. For every ${\rm x} \in \RR^{(M+1)K}$, let us assume that, for each $k \in \{1,\dots,K\}$, the vector ${\rm x}^{(k)} \in \RR^{M+1}$ is block-decomposed as follows: 
\begin{equation*}
{\rm x}^{(k)} = \Big[\underbrace{\left(x^{(k,1)}\right)^\top}_{\text{size}\,M_1} \quad\dots\quad \underbrace{\left(x^{(k,B)}\right)^\top}_{\text{size}\,M_B} \quad b^{(k)}
\Big]^\top,
\end{equation*}
with $M_1 + \dots + M_B = M$. We define the $\bell_{1,p}$-norm as
\begin{equation*}
g({\rm x}) 
= \sum_{k=1}^K \sum_{b=1}^{B} \|x^{(k,b)}\|_p.
\end{equation*}
The mixed-norm regularization is known to induce \textsl{block-sparsity}: the solution is partitioned into groups and the components of each group are ideally either all zeros or all non-zeros. In this context, the exponent values $p=2$ or $p = +\infty$ are the most popular choices. In particular, the $\bell_{1,\infty}$-norm tends to favor solutions with few nonzero groups having components of similar magnitude.

\section{Optimization method}\label{s:algo}
The resolution of Problems~\eqref{eq:regularized_SVM_hinge} and \eqref{eq:constrained_SVM_hinge} requires an efficient algorithm for dealing with nonsmooth functions and hard constraints. In the convex optimization literature, proximal algorithms constitute one of the most efficient approaches to deal with nonsmooth problems \citep{Combettes_P_2010_inbook_proximal_smsp, Combettes_P_2011_j-svva_pri_dsa, Bach_F_2012_j-ftml_optimization_sip, Parikh_N_2014_j-fto_proximal_a, Komodakis_N_2014}. The key tool in these methods is the \emph{proximity operator} \citep{Moreau_J_1965_bsmf_Proximite_eddueh}, defined for a function $\psi\in\Gamma_0(\HH)$ as
\begin{equation*}
(\forall u \in \HH)\qquad
\prox_\psi(u)= \operatorname*{argmin}_{v\in\HH}\; \frac12 \|v-u\|^2 +
\psi(v).
\end{equation*}
The proximity operator can be interpreted as a sort of subgradient step for the function $\psi$, as $p = \prox_\psi(y)$ is uniquely defined through the inclusion $y - p \in \partial \psi(p)$. In addition, it reverts to the projection onto a closed convex set $C\subset\HH$ in the case when $\psi = \iota_C$, in the sense that 
\begin{equation}\label{e:projprox}
(\forall u \in \HH)\qquad\prox_{\iota_C}(u) = P_C(u) = \operatorname*{argmin}_{v\in C}\; \frac12 \|v-u\|^2.
\end{equation}

Proximal algorithms work by iterating a sequence of steps in which the proximity operators of the functions involved in the minimization are evaluated at each iteration. An efficient computation of these operators is thus essential to design fast algorithms for  solving Problems~\eqref{eq:regularized_SVM_hinge}-\eqref{eq:constrained_SVM_hinge}. In the next sections, we will present two different approaches based on a Forward-Backward based Primal-Dual method (FBPD) 
\citep{Vu_B_2011_j-acm_spl_adm, Condat_L_2012, Chambolle_A_2010_first_opdacpai, Combettes_2014_forward_backward_primal_dual, Combettes2014_variable_metric_FB}, which we have selected among the large panel of proximal algorithms for its simplicity to deal with large-size linear operators. 

\subsection{Regularized formulation}
Problem~\eqref{eq:regularized_SVM_hinge} fits nicely into the framework provided by FBPD algorithm, since the proximity operators of both $g$ and $(h_\ell)_{1\le\ell\le L}$ can be efficiently computed. Indeed, $\prox_g$ has a closed form for several norms and mixed norms \citep{Parikh_N_2014_j-fto_proximal_a,Combettes_P_2010_inbook_proximal_smsp}, while $(\prox_{h_\ell})_{1\le\ell\le L}$ can be computed through the projection onto the standard simplex, as described in Proposition~\ref{prop:simp}. The projection onto the simplex can be efficiently computed with the method proposed by \citet{Condat2014_fast_proj_L1}.

\begin{proposition} \label{prop:simp} 
For every $\ell\in \{1,\ldots, L\}$,
\begin{equation}\label{eq:prox_max}
(\forall { y}^{(\ell)}\in \RR^K)\quad \prox_{\lambda h_\ell} \big(y^{(\ell)}\big) = y^{(\ell)} - P_{S_\lambda } \big(y^{(\ell)}+ r_\ell\big),
\end{equation}
with
\begin{equation*}
S_\lambda = \menge{u = \big( u^{(k)}\big)_{1\leq k\leq K} \in [0,+\infty[^{K}}{\sum_{k=1}^K u^{(k)} = \lambda}.
\end{equation*}
\end{proposition}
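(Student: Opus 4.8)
The plan is to recognize the function $h_\ell$ defined in \eqref{eq:fun_h} as a translate of the maximum function and then exploit the fact that the maximum is a support function. Concretely, I would introduce $\tilde h\colon \RR^K \to \RR$, $\tilde h(y) = \max_{1\le k\le K} y^{(k)}$, so that $h_\ell(y^{(\ell)}) = \tilde h(y^{(\ell)} + r_\ell)$. The key observation is that $\tilde h$ is the support function of the standard simplex $\Delta = \menge{u\in[0,+\infty[^K}{\sum_{k=1}^K u^{(k)} = 1}$: since the extreme points of $\Delta$ are the canonical basis vectors, $\sup_{u\in\Delta}\scal{u}{y} = \max_k y^{(k)}$. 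Equivalently, $\tilde h = \iota_\Delta^*$.

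Next, I would handle the positive scaling by $\lambda$. Because $\lambda\,\sigma_\Delta = \sigma_{\lambda\Delta}$ and $\lambda\Delta = S_\lambda$, we obtain $\lambda\tilde h = \sigma_{S_\lambda} = \iota_{S_\lambda}^*$, i.e.\ $\lambda\tilde h$ is the Fenchel conjugate of the indicator of the nonempty closed convex set $S_\lambda$. Applying Moreau's decomposition $\Id = \prox_{\psi} + \prox_{\psi^*}$ with $\psi = \iota_{S_\lambda}$ (so that $\psi^* = \lambda\tilde h$, using $\iota_{S_\lambda}^{**} = \iota_{S_\lambda}$), together with the identity \eqref{e:projprox} that $\prox_{\iota_{S_\lambda}} = P_{S_\lambda}$, yields
\[
(\forall u\in\RR^K)\qquad \prox_{\lambda\tilde h}(u) = u - P_{S_\lambda}(u).
\]

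Finally, I would transport this back through the translation by $r_\ell$. A direct change of variable $w = v + a$ in the definition of the proximity operator shows that, for any $f\in\Gamma_0(\RR^K)$ and any fixed $a\in\RR^K$, one has $\prox_{f(\,\cdot\,+a)}(u) = \prox_f(u+a) - a$. Taking $f = \lambda\tilde h$ and $a = r_\ell$ gives $\prox_{\lambda h_\ell}(y^{(\ell)}) = \prox_{\lambda\tilde h}(y^{(\ell)} + r_\ell) - r_\ell = y^{(\ell)} - P_{S_\lambda}(y^{(\ell)} + r_\ell)$, which is exactly \eqref{eq:prox_max}. I expect none of the steps to be a serious obstacle; the only points deserving care are the constant bookkeeping — checking that $\lambda\Delta = S_\lambda$ and that the additive $r_\ell$ cancels cleanly in the translation rule — and invoking Moreau's identity in its plain (unscaled) form with $\psi = \iota_{S_\lambda}$, so that no spurious $\lambda$-factors are introduced.
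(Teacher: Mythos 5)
Your proof is correct and follows essentially the same route as the paper's: identify $\lambda h_\ell$ (up to the translation by $r_\ell$) with the support function $\sigma_{S_\lambda}=\iota_{S_\lambda}^*$, apply Moreau's decomposition to get $\prox_{\sigma_{S_\lambda}} = \Id - P_{S_\lambda}$, and undo the translation via the standard rule $\prox_{f(\cdot+a)}(u)=\prox_f(u+a)-a$. You merely make explicit two steps the paper leaves implicit (the extreme-point argument showing $\max_k y^{(k)}=\sigma_\Delta(y)$ and the scaling $\lambda\Delta=S_\lambda$), so there is nothing to correct.
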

\begin{proof}
Note that $u\in \RR^K \!\mapsto\! \lambda \max_{1\leq k \leq K} u^{(k)}$ is the support function of $S_\lambda$, defined as 
$(\forall u \in \RR^K)$
$\sigma_{S_\lambda}(u) = \sup_{v\in S_\lambda}  v^\top u$. Hence, for every ${y}^{(\ell)}\in \RR^K$, $\lambda h_\ell({y}^{(\ell)}) = \sigma_{S_\lambda}({y}^{(\ell)}+r_\ell)$ and  
\begin{equation*}
\prox_{\lambda h_\ell} \big(y^{(\ell)}\big) =  \prox_{\sigma_{S_\lambda}}(y^{(\ell)}+r_\ell)-r_\ell,
\end{equation*}
Since $\sigma_{S_\lambda}$ is the conjugate function of $\iota_{S_\lambda}$,
\eqref{eq:prox_max} is deduced by applying Moreau's decomposition formula \cite[Theorem~14.3(ii)]{Bauschke_H_2011_book_con_amo} and \eqref{e:projprox}.
\end{proof}

The iterations associated with Problem~\eqref{eq:regularized_SVM_hinge} are summarized in Algorithm~\ref{algo:SVM_reg}, where the sequence $({\rm x}^{[i]})_{i\in \NN}$ is guaranteed to converge to a solution to Problem~\eqref{eq:regularized_SVM_hinge}, provided that such a solution exists \citep{Vu_B_2011_j-acm_spl_adm,Condat_L_2012}. In Algorithm~\ref{algo:SVM_reg}, we use the notation
\begin{equation*}
T = [T_1^\top\;\dots\;T_L^\top]^\top,
\qquad
r = [r_1^\top\;\dots\;r_L^\top]^\top.
\end{equation*}

\begin{remark}
	Algorithm~\ref{algo:SVM_reg} allows us to solve Problem~\eqref{eq:regularized_SVM_hinge} together with its (Fenchel-Rockafellar) dual formulation
	\begin{equation}\label{eq:dual_SVM}
	\minimize{y \in \RR^{LK}}\; g^*(-T^\top y) - \sum_{\ell=1}^L r_\ell^\top y^{(\ell)} \quad\subto\quad y \in (S_\lambda)^L,
	\end{equation} 
	where $g^*$ is the convex conjugate of $g$. In the case when $g = (1/2)\|\cdot\|^2_2$, the primal and dual solutions are linked by ${\rm x} = -T^\top y$, and thus Problem~\eqref{eq:dual_SVM} reduces to the (Lagrangian) dual formulation of Problem~\eqref{eq:slack_SVM} used in standard SVMs \citep{Crammer_K_2001_j-mach-learn-res_algorithmic_imk}.
\end{remark}

\begin{algorithm}
	\caption{FBPD for solving Problem~\eqref{eq:regularized_SVM_hinge}}\label{algo:SVM_reg}
	{\small
		\vspace{1em}
		Initialization
		\[
		\!\!\!\!\!\!\!\!\!\!\!\!\!\!\!\!\!\!\!\!\!\!\!
		\left\lfloor
		\begin{aligned}
		&\textrm{choose ${{\rm x}^{[0]}} \in \RR^{(M+1)K} $}\\
		&\textrm{choose ${y^{[0]}} \in \RR^{LK} $}\\
		&\textrm{set $\tau>0$ and $\sigma>0$ such that $\tau\sigma \|T\|^2 \le 1$.}\\
		\end{aligned}
		\right.\\
		\]
		
		\noindent For\; i = 0, 1, \dots
		
		\[
		\!\!\!\!\!\!\!\!\!\!\!\!\!\!\!\!\!\!\!\!\!\!\!\!\!\!\!\!\!\!\!\!\!\!\!\!\!\!\!\!\!\!\!\!\!\!\!
		\left\lfloor
		\begin{aligned}
		&{\rm x}^{[i+1]} = \prox_{\tau g}\big({\rm x}^{[i]} -\tau \, T^\top y^{[i]}\big)\\
		&\widehat{y}^{[i+1]} = y^{[i]}+\sigma T\big(2{\rm x}^{[i+1]}-{\rm x}^{[i]}\big)\\
		&y^{[i+1]} = P_{(S_\lambda)^L}\left( \widehat{y}^{[i+1]} + \sigma \, r \right).
		\end{aligned}
		\right.
		\]
	}
\end{algorithm}

\subsection{Constrained formulation}
Problem~\eqref{eq:constrained_SVM_hinge} presents a more challenging computational issue, as the projection onto the hinge-loss constraint set cannot be evaluated in closed form, and it would require to solve a constrained quadratic problem at each iteration. In order to manage this constraint, we propose to introduce a vector of auxiliary variables $\zeta = \big(\zeta^{(\ell)}\big)_{1 \le \ell \le L}$ in the minimization process, so that Problem~\eqref{eq:constrained_SVM_hinge} can be equivalently rewritten as
\begin{equation}\label{eq:epigraphical_SVM}
\minimize{({\rm x},\zeta)\in \RR^{(M+1)K}\times \RR^L} \; g({\rm x}) \quad \subto \quad
\left\{
\begin{aligned} 
&\sum_{\ell=1}^L \zeta^{(\ell)} \leq \eta,\\
&(\forall \ell\in \{1,\ldots,L\}) && h_\ell(T_\ell \, {\rm x}) \le \zeta^{(\ell)}.
\end{aligned}
\right.
\end{equation}
Interestingly, our approach is conceptually similar to adding the slack variables in \eqref{eq:svm_constraint}, even though our reformulation specifically aims at simplifying the 
way of solving the problem. Indeed, a possible interpretation of Problem~\eqref{eq:epigraphical_SVM} is the following:
\begin{equation}\label{eq:prob_epi}
\minimize{({\rm x},\zeta)\in \RR^{(M+1)K}\times \RR^L} g({\rm x}) \quad\subto\quad 
\begin{cases}
(T{\rm x}, &\!\!\!\! \zeta) \in E,\\
     &\!\!\!\! \zeta\hphantom{)} \in V_\eta,
\end{cases}
\end{equation}
where $E$ denotes the collection of epigraphs of $h_1,\dots,h_L$
\begin{equation*}
E = \menge{(y,\zeta) \in \RR^{LK}\times\RR^L}{
(\forall \ell \in \{1,\dots,L\}) \quad (y^{(\ell)},\zeta^{(\ell)}) \in \epi h_\ell},
\end{equation*}
and $V_\eta$ denotes a closed half-space
\begin{equation*}
V_\eta = \menge{\zeta \in \RR^L}{\sum_{\ell=1}^L \zeta^{(\ell)} \le \eta}.
\end{equation*}
The iterations related to Problem~\eqref{eq:prob_epi} are listed in Algorithm~\ref{algo:SVM_epi}, where the sequence $({\rm x}^{[i]},\zeta^{[i]})_{i\in \NN}$ is guaranteed to converge to a solution to \eqref{eq:prob_epi}, provided that such a solution exists \citep{Vu_B_2011_j-acm_spl_adm,Condat_L_2012}. 

\begin{algorithm}
\caption{FBPD for solving Problem~\eqref{eq:constrained_SVM_hinge}}\label{algo:SVM_epi}
{\small
\vspace{1em}
Initialization
\[
\left\lfloor
\begin{aligned}
&\textrm{choose $\parens{{\rm x}^{[0]}, \zeta^{[0]}} \in \RR^{(M+1)K} \times  \RR^L$}\\
&\textrm{choose $\parens{y^{[0]}, \xi^{[0]}} \in \RR^{L(K-1)} \times \RR^L$}\\
&\textrm{set $\tau>0$ and $\sigma>0$ such that $\tau\sigma \max\{\|T\|^2,1\} \le 1$}.\\
\end{aligned}
\right.\\
\]

\noindent For\; i = 0, 1, \dots

\[
\!\!\!\!\!\!\!\!\!\!\!\!\!\!\!\!\!\!\!\!\!\!\!\!\!\!\!\!\!\!\!\!\!\!\!\!\!\!\!\!\!\!\!\!\!\!\!
\left\lfloor
\begin{aligned}
&{\rm x}^{[i+1]} = \prox_{\tau g}\big({\rm x}^{[i]} -\tau \, T^\top y^{[i]}\big)\\
&\zeta^{[i+1]} = P_{V_\eta}\big(\zeta^{[i]}-\tau\, \xi^{[i]}\big)\\
& \widehat{y}^{[i]} = y^{[i]}+\sigma T\big(2{\rm x}^{[i+1]}-{\rm x}^{[i]}\big)\\
& \widehat{\xi}^{[i]} = \xi^{[i]}+\sigma \big(2\zeta^{[i+1]}-\zeta^{[i]}\big)\\
&\big(\widetilde{y}^{[i]},\widetilde{\xi}^{[i]}\big) =P_E\big(\widehat{y}^{[i]}/\sigma,\widehat{\xi}^{[i]}/\sigma\big)\\
&y^{[i+1]} = \widehat{y}^{[i]} - \sigma \widetilde{y}^{[i]}\\
&\xi^{[i+1]} = \widehat{\xi}^{[i]} - \sigma \widetilde{\xi}^{[i]}.
\end{aligned}
\right.
\]
}
\end{algorithm}

The advantage of our approach lies in the fact that the projections onto $E$ and $V_\eta$ employed in Algorithm~\ref{algo:SVM_epi} have closed form expressions. Indeed, the projection onto $V_\eta$ is straightforward \cite[Section 6.2.3]{Parikh_N_2014_j-fto_proximal_a}, while the projection onto $E$ can be block-decomposed as 
\begin{equation}\label{eq:epi_decomp}
P_E(y,\zeta) = \Big( P_{\epi h_\ell}(y^{(\ell)},\zeta^{(\ell)}) \Big)_{1\le\ell\le L}
\end{equation}
where a closed-form expression of $P_{\epi h_\ell}$ with $\ell \in \{1,\ldots,L\}$ is given in Proposition~\ref{ex:approx}. The proof of this new result follows the same line as the proof by \cite[Proposition~5]{Chierchia_G_2014_SIVP_epigraphical_ppt}, where we derived the epigraphical projection associated to the $\ell_\infty$-norm.

The decomposition in \eqref{eq:epi_decomp} yields two potential benefits. Firstly, the projection $P_{\epi h_\ell}$ is computed onto the lower-dimensional convex subset $\epi h_\ell$ of $\RR^{K}\times\RR$, whose dimensionality is only fixed by the number $K$ of classes. Secondly, these projections can be computed in parallel, since they are defined over disjoint blocks whose number is given by the cardinality $L$ of the training set (we refer to \citet{Gaetano2012} for an example of parallel implementation on GP-GPUs).

\begin{proposition}\label{ex:approx}
For every $\ell\in\{1,\ldots,L\}$, let $h_\ell$ be the function defined in \eqref{eq:fun_h} and, for every {\small$({ y}^{(\ell)},\zeta^{(\ell)}) \in \RR^{K}\times\RR$}, let {\small$\big(\nu^{(\ell,k)}\big)_{1\le k \le K}$} be the sequence {$\big(y^{(\ell,k)}+r^{(k)}_\ell\big)_{1\le k \le K}$} sorted in ascending order,\footnote{Note that the expensive sorting operation can be avoided by using a \emph{heap} data structure \citep{Cormen1990_algorithms}, which keeps a partially-sorted sequence such that the first element is the largest. This approach was used, e.g., by van den Berg et al.~\cite[Algorithm 2]{VanDenBerg_E_2008_j-siam-sci-comp_pro_pfb} for implementing the projection onto the $\bell_1$-ball.} and set {\small$\nu^{(\ell,0)} = -\infty$} and {\small$\nu^{(\ell,K+1)} = \pinf$}. Then, $P_{\epi h_\ell}({y}^{(\ell)},\zeta^{(\ell)}) = ({p}^{(\ell)},\theta^{(\ell)})$ with
\begin{equation*}
p^{(\ell)} = \Big[\, \min\{y^{(\ell,k)}, \theta^{(\ell)}-r^{(k)}_\ell\} \,\Big]_{1\le k\le K}
\end{equation*}
and
\begin{equation}\label{e:projscalmaxfuncter}
\theta^{(\ell)} = 
\frac{1}{K-\overline{k}^{(\ell)}+2} \left(\zeta^{(\ell)}+\sum_{k=\overline{k}^{(\ell)}}^{K}\nu^{(\ell,k)}\right),
\end{equation}
where $\overline{k}^{(\ell)}$ is the unique integer in {\small$\{1,\ldots,K+1\}$}
such that
\begin{equation}\label{e:projscalmaxfuncterbis}
\nu^{(\ell,\overline{k}^{(\ell)}-1)} < \theta^{(\ell)} \le \nu^{(\ell,\overline{k}^{(\ell)})},
\end{equation}
with the convention $\sum_{k=K+1}^{K} \cdot = 0$.
\end{proposition}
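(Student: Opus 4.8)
The plan is to compute $P_{\epi h_\ell}(y^{(\ell)},\zeta^{(\ell)})$ directly from its variational definition and to exploit a partial minimization over the two groups of variables. By \eqref{e:projprox}, writing $P_{\epi h_\ell}(y^{(\ell)},\zeta^{(\ell)}) = (p^{(\ell)},\theta^{(\ell)})$, the pair $(p^{(\ell)},\theta^{(\ell)})$ is the unique minimizer of $\frac12\|p-y^{(\ell)}\|^2 + \frac12(\theta-\zeta^{(\ell)})^2$ over the constraint set $\{(p,\theta)\in\RR^K\times\RR : \max_{1\le k\le K}(p^{(k)}+r_\ell^{(k)})\le\theta\}$, which is strictly convex and coercive, hence the minimizer exists and is unique. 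The shift $r_\ell$ turns this into the projection of the shifted data $(y^{(\ell,k)}+r_\ell^{(k)})_{1\le k\le K}$ (together with $\zeta^{(\ell)}$) onto the epigraph of the plain $\max$ function; this is exactly the situation treated in \cite[Proposition~5]{Chierchia_G_2014_SIVP_epigraphical_ppt}, up to the shift that is reinstated at the end.

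The first key step is a partial minimization over $p$ with $\theta$ held fixed. Since the constraint decouples as $p^{(k)}\le\theta-r_\ell^{(k)}$, each coordinate is independent, and the scalar minimizer of $(p^{(k)}-y^{(\ell,k)})^2$ subject to $p^{(k)}\le\theta-r_\ell^{(k)}$ is the clipping $p^{(k)}=\min\{y^{(\ell,k)},\theta-r_\ell^{(k)}\}$, which already yields the stated expression for $p^{(\ell)}$. Substituting this back and using the identity $\min\{y^{(\ell,k)},\theta-r_\ell^{(k)}\}-y^{(\ell,k)} = -(y^{(\ell,k)}+r_\ell^{(k)}-\theta)_+$, the objective reduces to the scalar function $\Phi(\theta)=\frac12\sum_{k=1}^K(y^{(\ell,k)}+r_\ell^{(k)}-\theta)_+^2+\frac12(\theta-\zeta^{(\ell)})^2$. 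This $\Phi$ is strictly convex and continuously differentiable (the map $t\mapsto(t)_+^2$ being $C^1$), so $\theta^{(\ell)}$ is characterized by $\Phi'(\theta^{(\ell)})=0$, that is, $\theta^{(\ell)}-\zeta^{(\ell)}=\sum_{k=1}^K(y^{(\ell,k)}+r_\ell^{(k)}-\theta^{(\ell)})_+$.

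The second key step is to linearize the positive part by locating $\theta^{(\ell)}$ among the sorted shifted values $\nu^{(\ell,k)}$. Assuming $\nu^{(\ell,\overline{k}-1)}<\theta^{(\ell)}\le\nu^{(\ell,\overline{k})}$, the active terms in the sum are precisely those of sorted rank $\ge\overline{k}$ (the rank-$\overline{k}$ term contributing zero in the boundary case $\theta^{(\ell)}=\nu^{(\ell,\overline{k})}$), so the optimality condition becomes $\theta^{(\ell)}-\zeta^{(\ell)}=\sum_{k=\overline{k}}^K(\nu^{(\ell,k)}-\theta^{(\ell)})$. Collecting the $\theta^{(\ell)}$ terms gives $(K-\overline{k}+2)\,\theta^{(\ell)}=\zeta^{(\ell)}+\sum_{k=\overline{k}}^K\nu^{(\ell,k)}$, which is exactly \eqref{e:projscalmaxfuncter}.

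The part that requires care, and which I expect to be the main obstacle, is establishing that the integer $\overline{k}^{(\ell)}$ of \eqref{e:projscalmaxfuncterbis} is well defined and unique, so that the closed form is consistent. I would settle this from strict convexity: $\Phi$ has a unique minimizer $\theta^{(\ell)}\in\RR$, and, with the conventions $\nu^{(\ell,0)}=-\infty$ and $\nu^{(\ell,K+1)}=\pinf$, the half-open intervals $(\nu^{(\ell,\overline{k}-1)},\nu^{(\ell,\overline{k})}]$ for $\overline{k}\in\{1,\dots,K+1\}$ partition $\RR$; hence $\theta^{(\ell)}$ lies in exactly one of them, empty intervals arising from ties in the sorted sequence being automatically excluded. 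This determines $\overline{k}^{(\ell)}$ uniquely, and plugging the corresponding rank into $\Phi'(\theta^{(\ell)})=0$ reproduces \eqref{e:projscalmaxfuncter}; because $\Phi'$ is strictly increasing, the candidate value of the formula crosses each threshold $\nu^{(\ell,k)}$ at most once, so the interval test in \eqref{e:projscalmaxfuncterbis} is met by a single $\overline{k}^{(\ell)}$. Reinstating the shift through $p^{(k)}=\min\{y^{(\ell,k)},\theta^{(\ell)}-r_\ell^{(k)}\}$ then completes the identification of $P_{\epi h_\ell}(y^{(\ell)},\zeta^{(\ell)})$.
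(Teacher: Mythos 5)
Your proof is correct and follows essentially the same route as the paper's: partial minimization over $p^{(\ell)}$ yielding the coordinatewise clipping, reduction to the one-dimensional strictly convex problem in $\theta^{(\ell)}$, localization of the minimizer among the sorted values $\nu^{(\ell,k)}$, and the resulting linear equation giving \eqref{e:projscalmaxfuncter}. The only cosmetic difference is that the paper phrases the scalar step as computing $\prox_{\varphi_\ell}(\zeta^{(\ell)})$ via the characterization $\zeta^{(\ell)}-\theta^{(\ell)}=\varphi_\ell'(\theta^{(\ell)})$, which is identical to your condition $\Phi'(\theta^{(\ell)})=0$; your handling of ties (empty half-open intervals excluded from the partition of $\RR$) is in fact slightly more explicit than the paper's appeal to uniqueness of the proximity operator.
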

\begin{proof}
For every $(y^{(\ell)},\zeta^{(\ell)}) \in \RR^{K}\times\RR$, $P_{\epi h_\ell}(y^{(\ell)},\zeta^{(\ell)})$ denotes the unique solution to
\begin{equation*}
\min_{(p^{(\ell)},\theta^{(\ell)}) \in \epi h_\ell}
\|p^{(\ell)}-y^{(\ell)}\|^2+(\theta^{(\ell)}-\zeta^{(\ell)})^2,
\end{equation*}
which is equivalent to find the minimizer of
\begin{equation}\label{e:projDimax}
\min_{\theta^{(\ell)} \in \RR} \; \Big\{(\theta^{(\ell)}-\zeta^{(\ell)})^2  +\!\!\!\!
\min_{\substack{\;\;\;{ p}^{(\ell,1)}\le \theta^{(\ell)} - r_\ell^{(1)}\\\vdots\\
{ p}^{(\ell,K)}\le  \theta^{(\ell)} -r_\ell^{(K)}}}
\|{ p}^{(\ell)}-{ y}^{(\ell)}\|^2\Big\}.
\end{equation}
For every $\theta^{(\ell)}\in \RR$, the inner minimization is achieved when $p^{(\ell,k)} \!=\! \min\{y^{(\ell,k)}, \theta^{(\ell)}-r^{(k)}_\ell\}$ for each $k\in\{1,\dots,K\}$, reducing \eqref{e:projDimax} to
\begin{equation*}
\underset{\theta^{(\ell)} \in \RR}{\operatorname{min}}\;\Big\{(\theta^{(\ell)}-\zeta^{(\ell)})^2 +
\sum_{k=1}^{K} (\max\{{y}^{(\ell,k)}+r^{(k)}_\ell-\theta^{(\ell)},0\})^2\Big\},
\end{equation*}
which achieves its minimum when $\theta^{(\ell)} = \prox_{\varphi_\ell}(\zeta^{(\ell)})$, with
\begin{equation}\label{e:varphiinf}
(\forall v \in \RR)\qquad
\varphi_\ell(v) = \frac12\sum_{k=1}^{K} 
(\max\{{y}^{(\ell,k)}+r^{(k)}_\ell-v,0\})^2.
\end{equation}
The closed-form expression of this proximity operator, as well as the projection onto $\epi h_\ell$, are derived in the following. 
In order to prove \eqref{e:projscalmaxfuncter}, we need to compute the proximity operator of $\varphi_\ell$ defined in \eqref{e:varphiinf}. Such a function belongs to $\Gamma_0(\RR)$ since, for each $k\!\in\! \{1,\ldots,K\}$, $\max\{(\nu^{(\ell,k)} - \cdot),0\}$ is finite convex and $(\cdot)^2$ is finite convex and increasing on $\RP$. In addition, $\varphi_\ell$ is differentiable and such that, for every $v\in \RR$ and $k \in \{1,\ldots,K+1\}$,
\begin{equation*}
\nu^{(\ell,k-1)} < v \le \nu^{(\ell,k)} \quad\Rightarrow \quad
 \varphi_\ell(v) =
\frac12 \sum_{j=k}^{K} \big(v-\nu^{(\ell,j)} \big)^2.
\end{equation*}
Therefore, there exists a $\bar{k}^{(\ell)} \in \{1,\ldots,K+1\}$ such that $\nu^{(\ell,\bar{k}^{(\ell)}-1)} < \theta^{(\ell)} \le \nu^{(\ell,\bar{k}^{(\ell)})}$, which yields \eqref{e:projscalmaxfuncterbis}. Moreover, by the definition of proximity operator, $\theta^{(\ell)} = \prox_{\varphi_\ell}(\zeta^{(\ell)})$ is uniquely defined by $\zeta^{(\ell)}-\theta^{(\ell)} = \varphi'_\ell(\theta^{(\ell)})$, yielding
\begin{equation*}
\zeta^{(\ell)}-\theta^{(\ell)}  = \sum_{k=\bar{k}^{(\ell)}}^{K} (\theta^{(\ell)} - \nu^{(\ell,k)}),
\end{equation*}
which is equivalent to \eqref{e:projscalmaxfuncter}.
The uniqueness of $\bar{k}^{(\ell)}$
follows from that of $\prox_{\varphi_\ell}(\zeta^{(\ell)})$. 
\end{proof}


\section{Numerical results}\label{s:nr}
In this section, we numerically evaluate the performance of sparse multiclass SVM w.r.t.\ the three following databases.

\begin{itemize}
\item \textbf{Leukemia database.}
The first experiment concerns the classification of microarray data. The considered database contains $72$ samples of $N=M=7129$ gene expression levels (so that $\phi(u) = u$)  measured from patients having $K=3$ types of leukemia disease \citep{Golub1999}. The database is usually organized in $L=38$ training samples and $34$ test samples.\footnote{Data available at \textsl{www.broadinstitute.org/cancer/software/genepattern/datasets}} In our experiments, we used blocks of $5$ genes for the mixed-norm regularization.

\item \textbf{MNIST dataset.}
The second experiment concerns the classification of handwritten digits. More precisely, we consider the MNIST database \citep{LeCun_1998_procIEEE}, which contains a number of $28 \times 28$ grayscale images ($N=784$) displaying digits from $0$ to $9$ ($K=10$). The database is organized in $60000$ training images and $10000$ test images.\footnote{Data available at \textsl{http://yann.lecun.com/exdb/mnist}} In our experiments, we defined the mapping $\phi$ by resorting to the scattering convolution network \citep{Bruna_J_2013_j-ieee-pami_scattering_networks} with $\overline{m} = 2$ wavelet layers scaled up to $2^J=4$, which transforms an input image of size $28 \times 28$ in $81$ images of size $14\times 14$ (thus $M=15876$). For the regularization, we used the $\bell_{1,\infty}$-norm by dividing each vector $({\rm x}^{(k)})_{1\le k\le K}$ in $14^2$ blocks of size $81$. Moreover, in order to evaluate the performance, we trained a classifier on $25$ different training subsets of size $L\in\{3K,5K,10K\}$, we computed the classification errors by evaluating the $25$ trained classifiers on the whole test set, and we averaged the resulting errors. 

\item \textbf{News20 database.}
The third experiment concerns the classification of text documents into a fixed number of predefined categories. More precisely, we consider the News20 database \citep{Lang95}, which contains a number of documents partitioned across $K=20$ different newsgroups. The database is organized in $11314$ training documents and $7532$ test documents.\footnote{Data available at \textsl{www.cad.zju.edu.cn/home/dengcai/Data/TextData.html}}
In our experiments, we defined the mapping $\phi$ by resorting to the \emph{term frequency -- inverse document frequency} transformation \citep{Joachims1998}, yielding $M=26214$. For the regularization, we used $\bell_{1,2}$-norm in the same way as \citet{Blondel_2013_j-mach-learn_block_coord_SMV}. Moreover, in order to evaluate the performance, we trained a classifier on $10$ different training subsets of size $L\in\{5K,10K,50K\}$, we computed the classification errors by evaluating the $10$ trained classifiers on the whole test set, and  we averaged the resulting errors.
\end{itemize}

\subsection{Assessment of classification accuracy}
In this section, we evaluate the classification errors obtained with the sparse multiclass SVM formulated in Problems~\eqref{eq:regularized_SVM_hinge}-\eqref{eq:constrained_SVM_hinge}. Our objective here is to show that the \emph{exact} hinge loss allows us to achieve better performance than its approximated smooth versions, especially with a few training data. Hence, we compare the proposed method with the following approaches:
\begin{itemize}
\item the multiclass SVM proposed by \citet{Blondel_2013_j-mach-learn_block_coord_SMV}
\begin{equation}\label{eq:square}
\operatorname*{minimize}_{{\rm x} \in \RR^{(M+1)K}}\;  \, g({\rm x}) + \\
 \lambda\sum_{\ell=1}^L \sum_{k\ne z_\ell} \left(\max\big\{0, \mu_\ell+ \varphi(u_\ell)^\top ( {\rm x}^{(k)} - {\rm x}^{(z_\ell)})\big\}\right)^2,
\end{equation}

\item the multinomial logistic regression (e.g., see   \citet{Krishnapuram_2005_j-pami-multi_logit})
\begin{equation}\label{eq:logit}
\operatorname*{minimize}_{{\rm x} \in \RR^{(M+1)K}}\;  g({\rm x}) + \\
 \lambda\sum_{\ell=1}^L \log\Bigg(1 + \sum_{k\ne z_\ell} \exp\Big\{\mu_\ell+ \varphi(u_\ell)^\top ( {\rm x}^{(k)} - {\rm x}^{(z_\ell)})\Big\}\Bigg).
\end{equation}

\item the binary SVM by \citet{Laporte2014_j-ieee-tnnls_sparseSVM} based on the ``one-vs-all" strategy, which aims, for every $k \in \{1,\dots,K\}$, to
\begin{equation}\label{eq:one_vs_all}
\operatorname*{minimize}_{{\rm x}^{(k)} \in \RR^{(M+1)}}\;  \, g({\rm x}) + \\
\lambda\sum_{\ell=1}^L \left(\max\big\{0, \mu_\ell+ \widetilde{z}_\ell \; \varphi(u_\ell)^\top {\rm x}^{(k)}\big\}\right)^2,
\end{equation}
with $\widetilde{z}_\ell$ being equal to $1$ if $z_\ell = k$, and $-1$ otherwise. Note that \eqref{eq:one_vs_all} may be seen as a special case of \eqref{eq:square}.

\end{itemize}

In the following, we refer to Problems~\eqref{eq:regularized_SVM_hinge}-\eqref{eq:constrained_SVM_hinge} as \emph{hinge}, and to Problems~\eqref{eq:square}-\eqref{eq:one_vs_all}, respectively, as \emph{square}, \emph{logit}, and \emph{one-vs-all}. Since the parameters $\lambda$ and $\eta$ need to be estimated (e.g., through cross validation), it is important to evaluate the impact of their choice on the performance, although it is out of the scope of this paper to devise an optimal strategy to set this bound. To compare the above methods for different choices of these parameters, we set $\lambda = \alpha^{-1}$ or $\eta = \alpha L$, by varying $\alpha$ inside a fixed set of predefined values. We also follow the usual convention of setting $\mu_\ell\equiv 1$.

\begin{itemize}

\item \textbf{Leukemia database}. Table~\ref{tab:leukemia} reports the classification errors, as well as the number of non-zero coefficients in vectors $({\rm x}^{(k)})_{1\le k\le 3}$, obtained with \emph{hinge}, \emph{square}, \emph{logit}, and \emph{one-vs-all} using various regularization terms. For each method, we set $\alpha$ to the value yielding the best accuracy (by using a simple trial-and-error strategy). The results indicate that sparse regularization allows us to effectively select a small set of important features for each prediction vector $({\rm x}^{(k)})_{1\le k\le 3}$, with better results than the quadratic regularization. In addition, the classification errors show that \emph{hinge} is often more accurate than \emph{square}.

\begin{table*}
	\centering
	\caption{Comparisons on the leukemia database.}
	\label{tab:leukemia}
	{\tiny
		\begin{tabular}[b]{ccccccccc}
			\toprule
			$g({\rm x})$ & \multicolumn{2}{c}{\textsc{hinge}} & \multicolumn{2}{c}{\textsc{square}} & \multicolumn{2}{c}{\textsc{logit}} & \multicolumn{2}{c}{\textsc{one-vs-all}}\\
			\cmidrule(rl){2-3}\cmidrule(rl){4-5}\cmidrule(rl){6-7}\cmidrule(rl){8-9}
			& errors & non-zero coeff. & errors & non-zero coeff. & errors & non-zero coeff. & errors & non-zero coeff.\\
			\midrule
			$\bell_2$          & $\mathbf{1/34}$ & $7129+7129+7129$ &         $2/34$  & $7129+7129+7129$ & $\mathbf{1/34}$ & $7129+7129+7129$ & $2/34$ & $7129+7129+7129$ \\
			$\bell_1$          & $\mathbf{2/34}$ &       $13+03+10$ &         $3/34$  &          $8+3+8$ &         $3/34$  &       $18+05+14$ & $3/34$ & $19+8+15$ \\
			$\bell_{1,2}$      & $\mathbf{0/34}$ &       $95+5+75$  &         $1/34$  &       $55+05+45$ & $\mathbf{0/34}$ &       $50+05+35$ & $1/34$ & $70+10+50$\\
			$\bell_{1,\infty}$ & $\mathbf{0/34}$ &       $50+5+45$  & $\mathbf{0/34}$ &       $35+05+35$ & $\mathbf{0/34}$ &       $50+05+40$ & $\mathbf{0/34}$ & $45+5+45$\\
			\bottomrule
		\end{tabular}
	}
\end{table*}

\item \textbf{MNIST database}. Figures~\ref{fig:mnist:errors_L3}, \ref{fig:mnist:errors_L5} and \ref{fig:mnist:errors_L10} report the classification errors as a function of the regularization hyperparameter. These results were obtained with the $\bell_{1,\infty}$-norm regularization, as it was the one leading to the best results in all our experiments on this database. The classification errors indicate that the \emph{hinge} approach is slightly more accurate than the other ones. On the other side, Figures~\ref{fig:mnist:sparse_L3}, \ref{fig:mnist:sparse_L5} and \ref{fig:mnist:sparse_L10} report the percentage of zero coefficients in vectors $({\rm x}^{(k)})_{1\le k\le K}$ as a function of $\alpha$. The plots show that the \emph{hinge} approach yields solutions slightly more sparse than the other ones.

\item \textbf{News20 database}. Figures~\ref{fig:news20:errors_L3}, \ref{fig:news20:errors_L5} and \ref{fig:news20:errors_L10} report the classification errors (as a function of the regularization hyperparameter) obtained by using the $\bell_{1,2}$-norm regularization. The classification errors indicate that the \emph{hinge} approach is slightly more accurate than the \emph{square} approach. The plots also show that the results obtained with the hinge approach are more robust w.r.t.\ the choice of the regularization parameter. On the other side, Figures~\ref{fig:news20:sparse_L3}, \ref{fig:news20:sparse_L5} and \ref{fig:news20:sparse_L10} report the percentage of zero coefficients in vectors $({\rm x}^{(k)})_{1\le k\le K}$ as a function of $\alpha$. The plots show that the \emph{hinge} approach yields solutions as sparse as the \emph{square} approach.
\end{itemize}

\begin{figure}
	\centering
	\subfloat[$L/K=3$ (errors vs $\alpha$)]{ \includegraphics[width=0.45\textwidth]{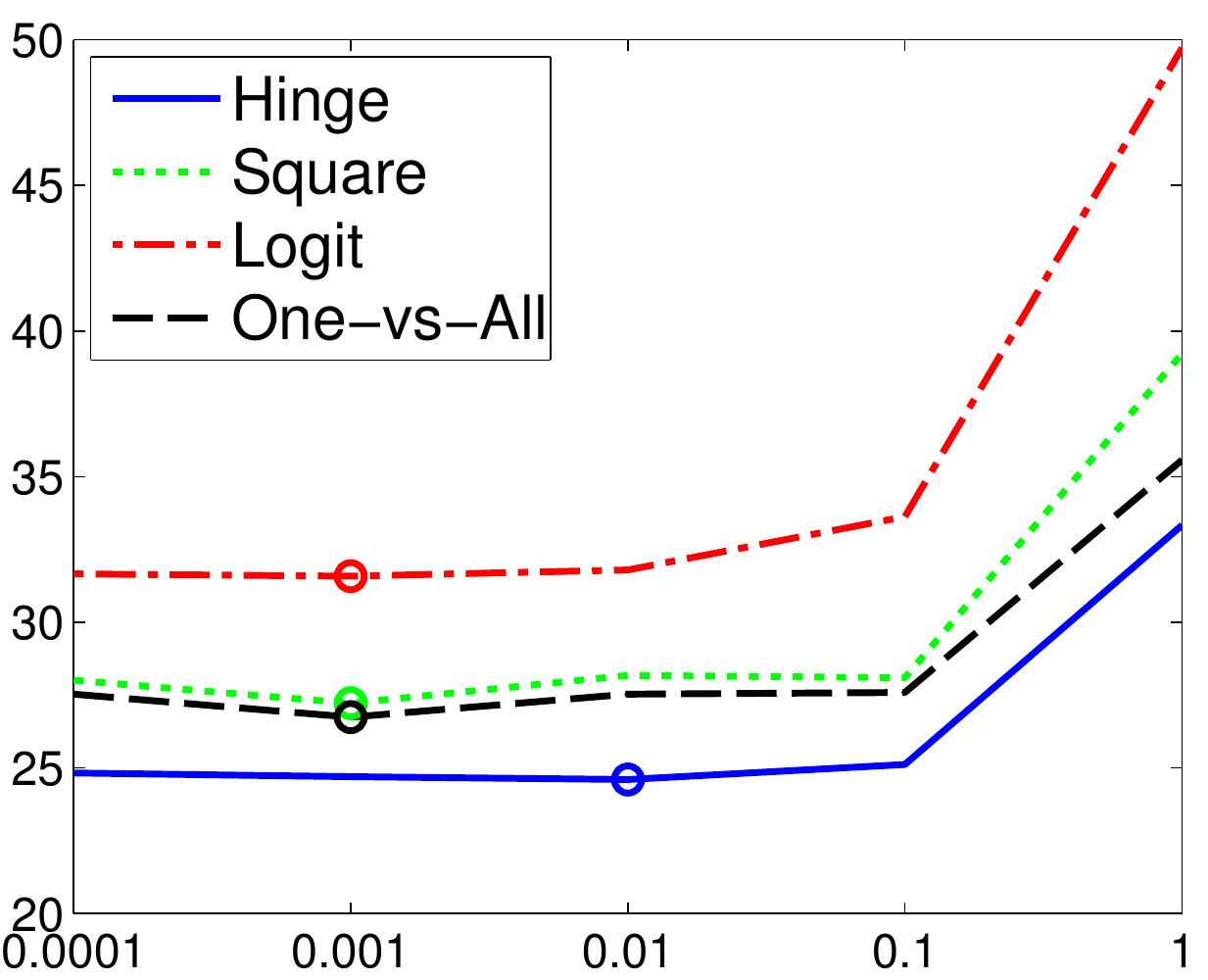}\label{fig:mnist:errors_L3}}
	\hfill
	\subfloat[$L/K=3$ (sparsity vs $\alpha$)]{ \includegraphics[width=0.45\textwidth]{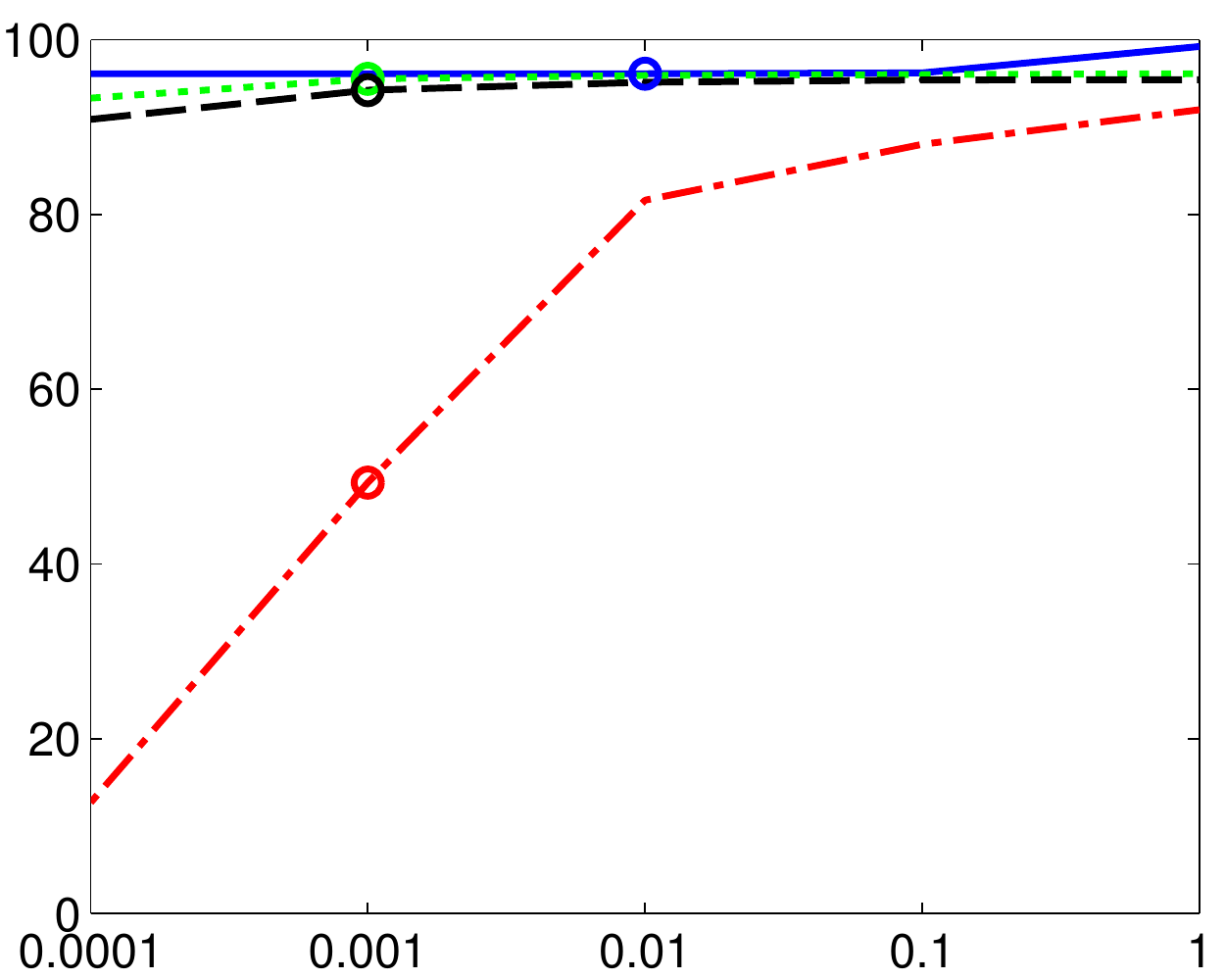}\label{fig:mnist:sparse_L3}}
	\hfill
	\subfloat[$L/K=5$ (errors vs $\alpha$)]{ \includegraphics[width=0.45\textwidth]{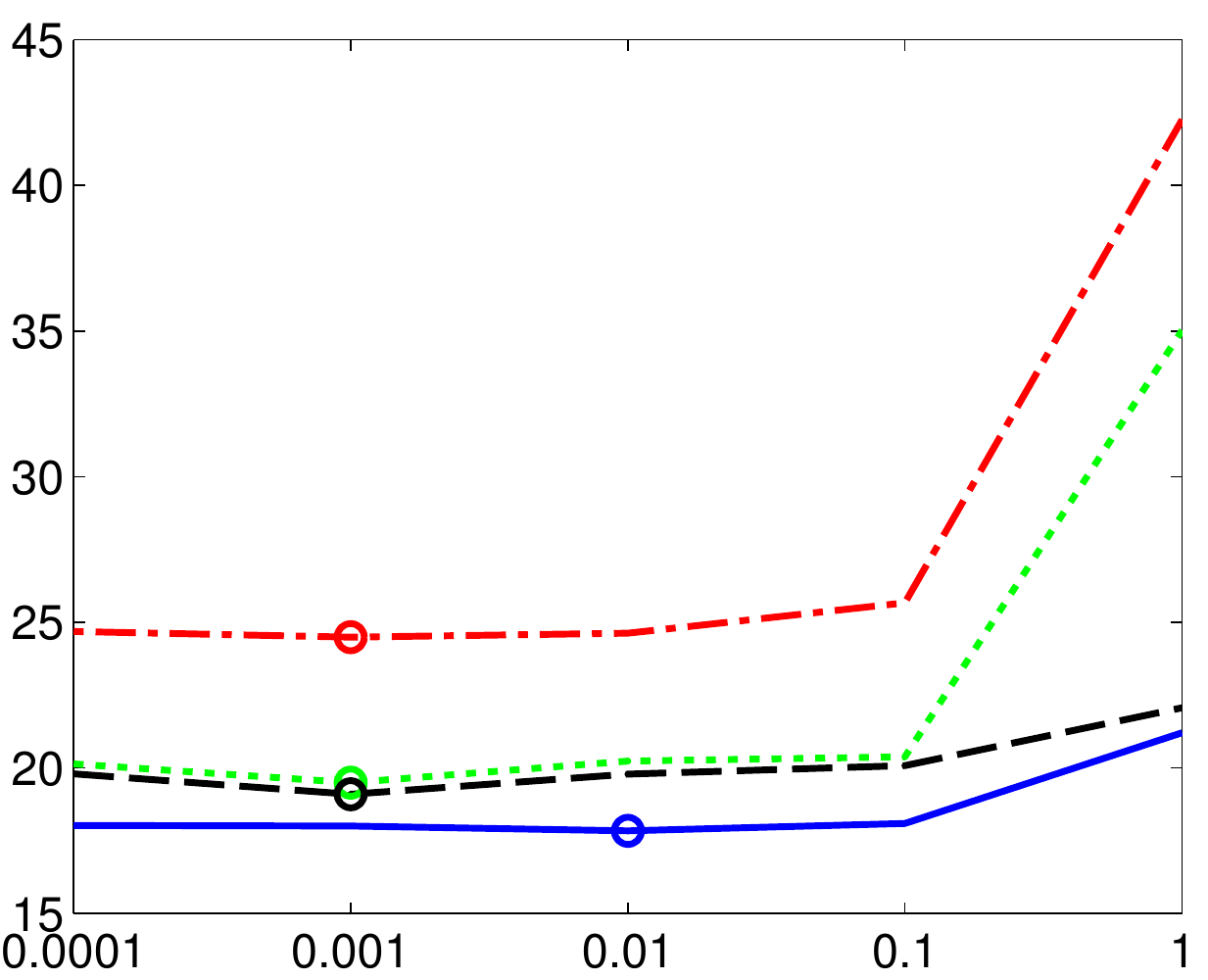}\label{fig:mnist:errors_L5}}
	\hfill
	\subfloat[$L/K=5$ (sparsity vs $\alpha$)]{ \includegraphics[width=0.45\textwidth]{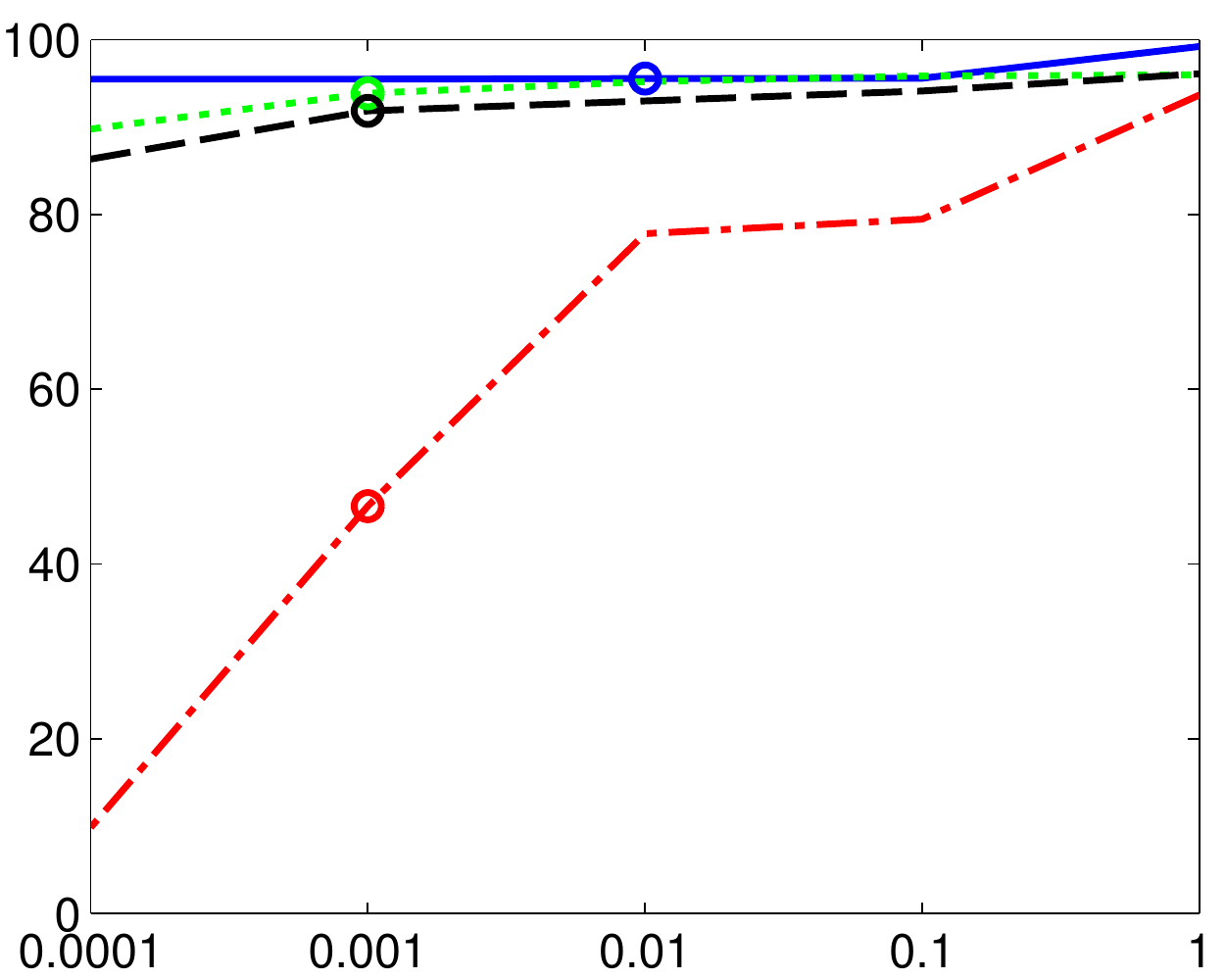}\label{fig:mnist:sparse_L5}}
	\hfill
	\subfloat[$L/K=10$ (errors vs $\alpha$)]{ \includegraphics[width=0.45\textwidth]{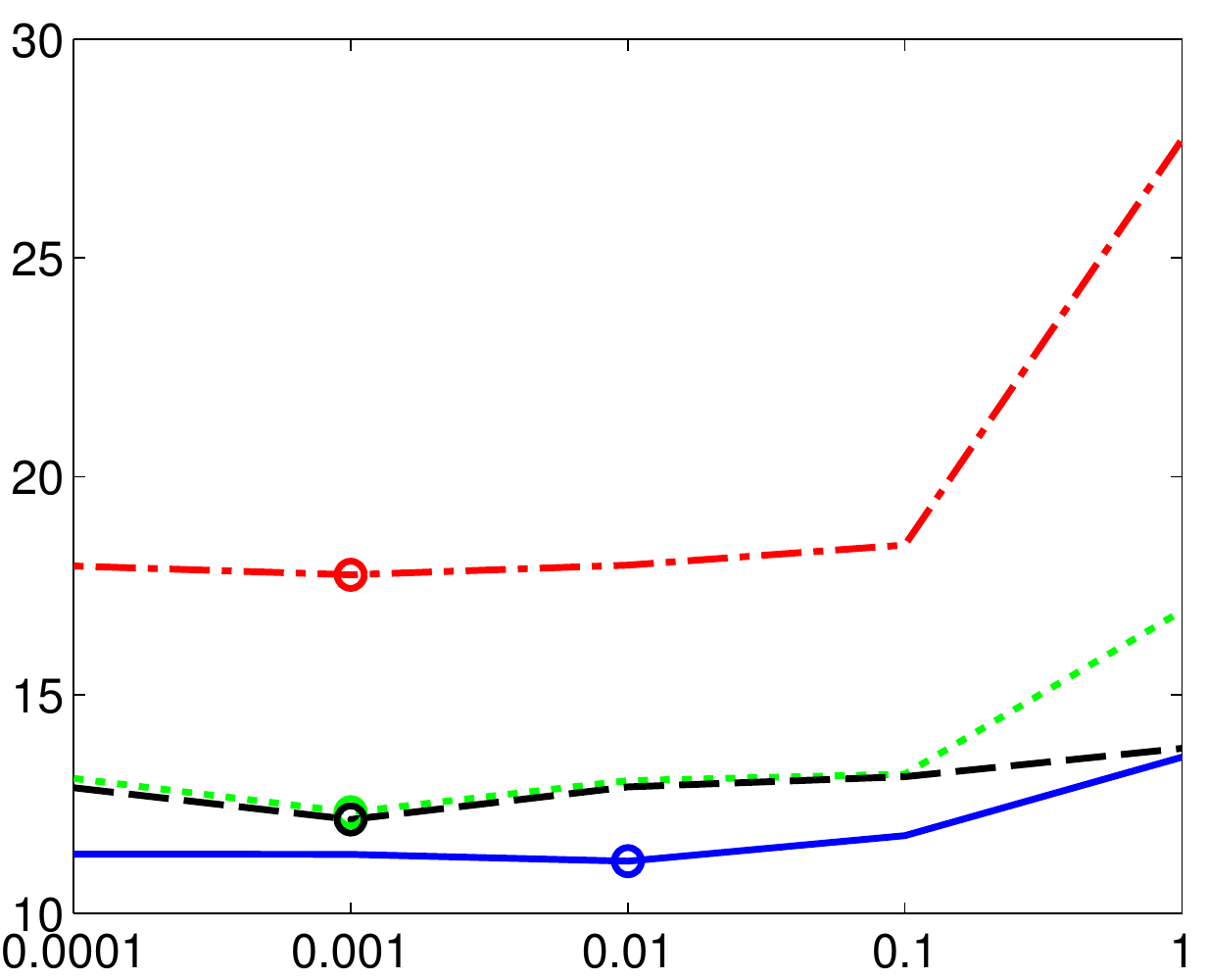}\label{fig:mnist:errors_L10}}
	\hfill
	\subfloat[$L/K=10$ (sparsity vs $\alpha$)]{ \includegraphics[width=0.45\textwidth]{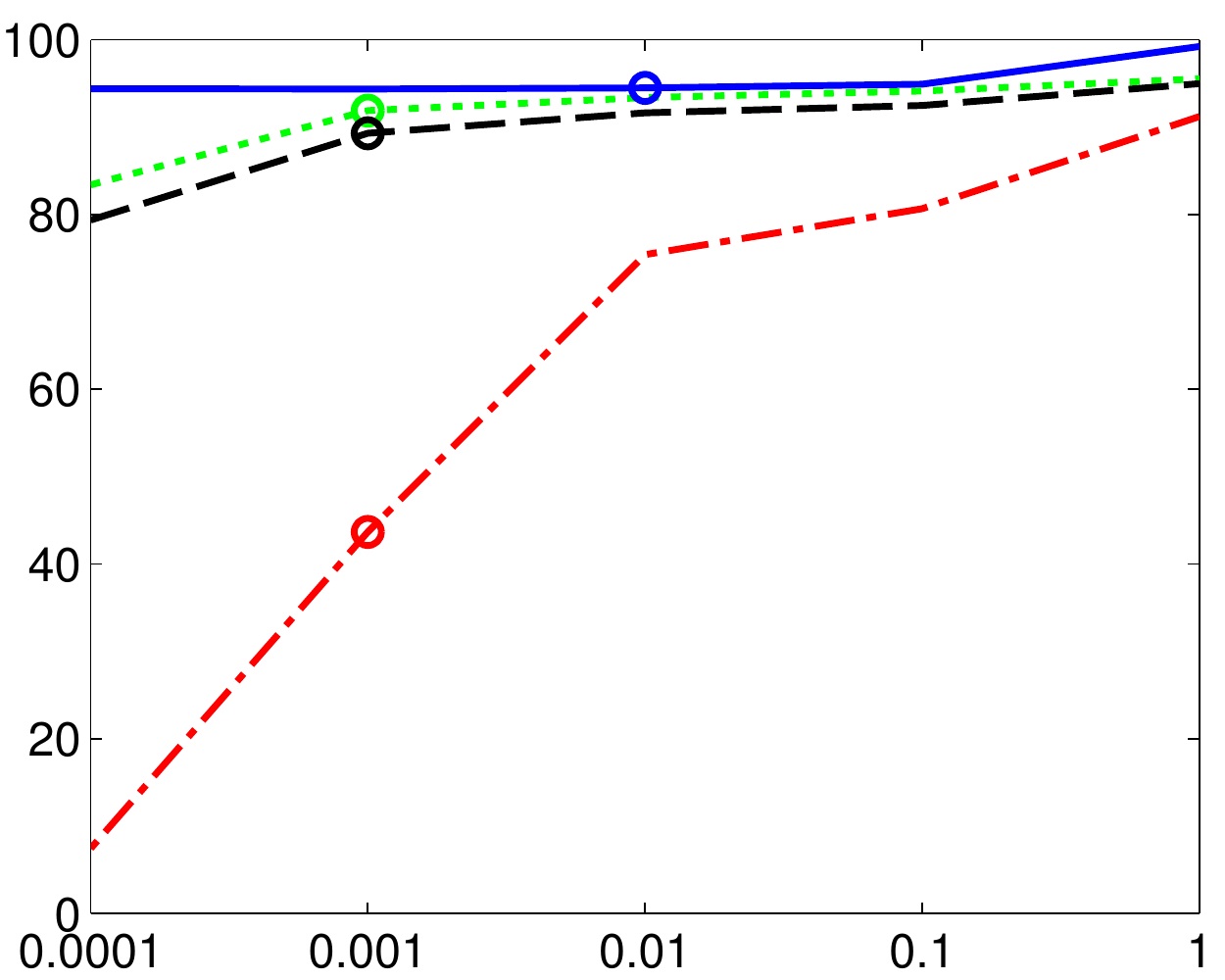}\label{fig:mnist:sparse_L10}}
	
	\caption{Results on MNIST database with the $\bell_{1,\infty}$-regularization for $L\in\{3K,5K,10K\}$. Left column: classification errors as a function of $\alpha$. Right column: percentage of zero coefficients in vectors $({\rm x}^{(k)})_{1\le k\le K}$ as a function of $\alpha$. The circles mark the values yielding the best accuracy.}
	\label{fig:mnist}
\end{figure}
		
\begin{figure*}
	\centering
	\subfloat[$L/K=5$ (errors vs $\alpha$)]{ \includegraphics[width=0.45\textwidth]{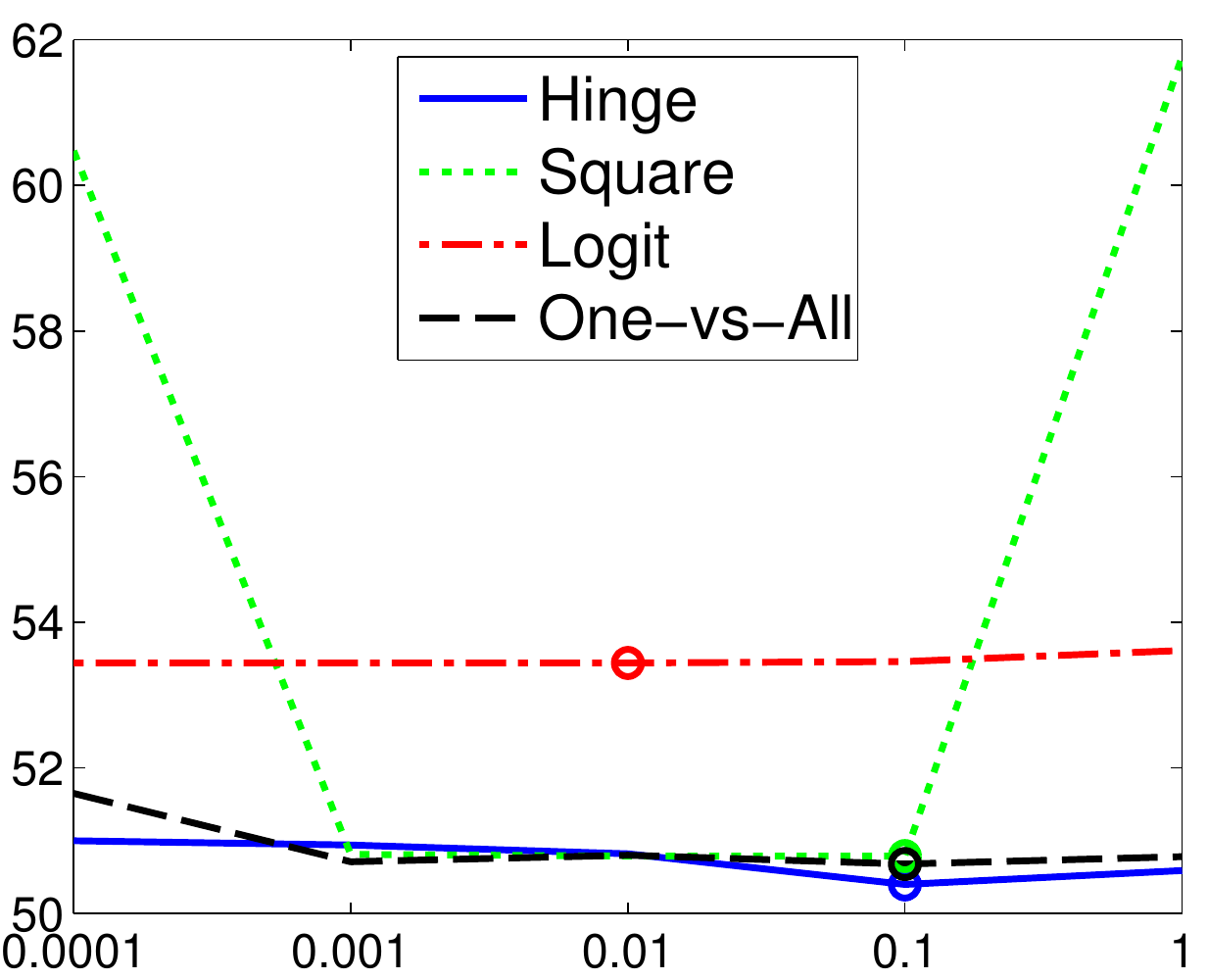}\label{fig:news20:errors_L3}}
	\hfill
	\subfloat[$L/K=5$ (sparsity vs $\alpha$)]{ \includegraphics[width=0.45\textwidth]{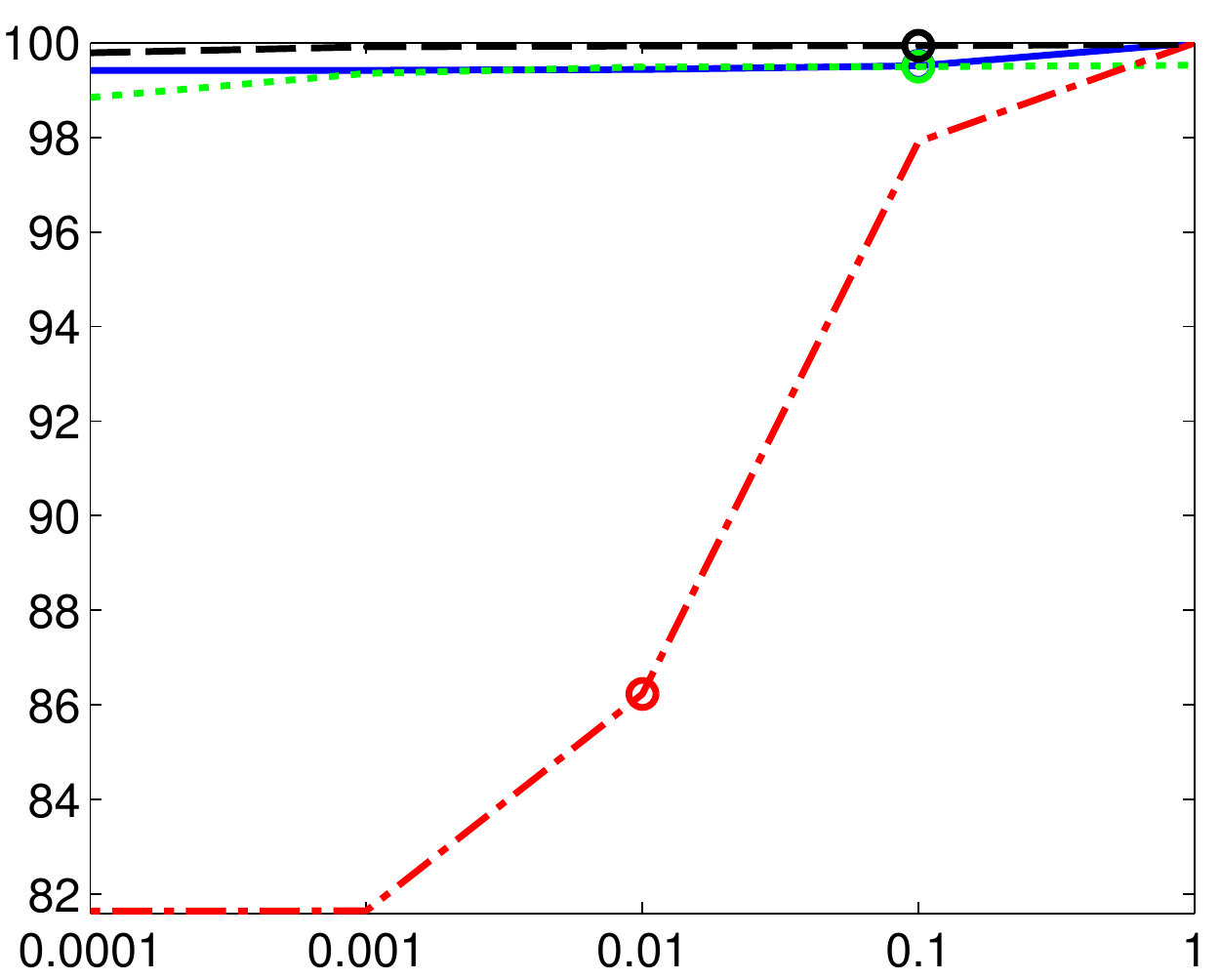}\label{fig:news20:sparse_L3}}
	\hfill
	\subfloat[$L/K=10$ (errors vs $\alpha$)]{ \includegraphics[width=0.45\textwidth]{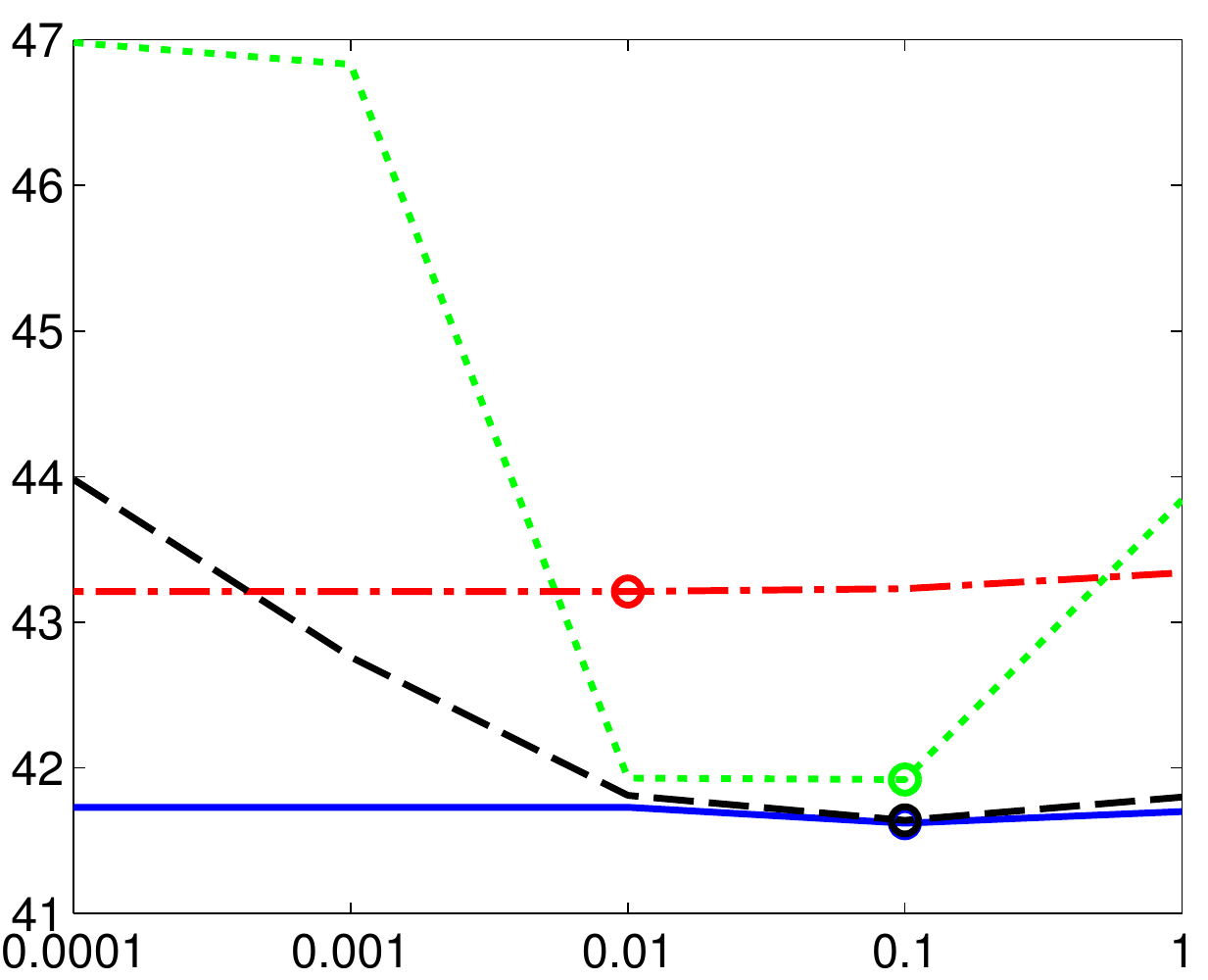}\label{fig:news20:errors_L5}}
	\hfill
	\subfloat[$L/K=10$ (sparsity vs $\alpha$)]{ \includegraphics[width=0.45\textwidth]{alpha_sparse_news20_L10}\label{fig:news20:sparse_L5}}
	\hfill
	\subfloat[$L/K=50$ (errors vs $\alpha$)]{ \includegraphics[width=0.45\textwidth]{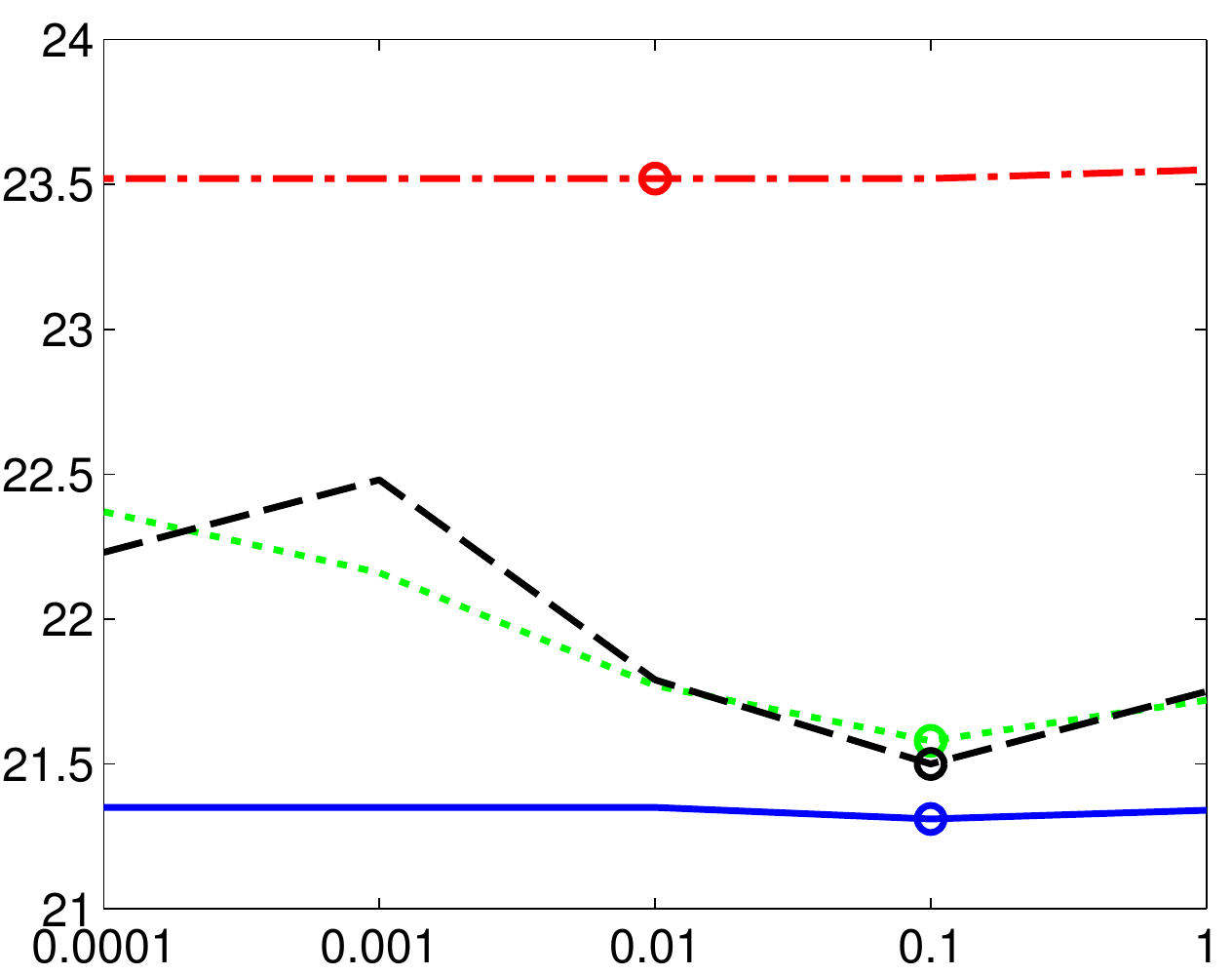}\label{fig:news20:errors_L10}}
	\hfill
	\subfloat[$L/K=50$ (sparsity vs $\alpha$)]{ \includegraphics[width=0.45\textwidth]{alpha_sparse_news20_L10}\label{fig:news20:sparse_L10}}
	\caption{Results on News20 database with the $\bell_{1,2}$-regularization for $L\in\{5K,10K,50K\}$. Left column: classification errors as a function of $\alpha$. Right column: percentage of zero coefficients in vectors $({\rm x}^{(k)})_{1\le k\le K}$ as a function of $\alpha$. The circles mark the values yielding the best accuracy.}
	\label{fig:news20}
\end{figure*}

\pagebreak
\subsection{Assessment of execution times}
In this section, we compare the execution times of Algorithms~\ref{algo:SVM_reg}~and~\ref{algo:SVM_epi} with\footnote{The codes were implemented in MATLAB and executed on a Intel CPU at 3.33 GHz and 24 GB of RAM.}\\[-1.5em]
\begin{itemize}

\item a FISTA implementation of Problem~\eqref{eq:square},\\[-1.5em]

\item a forward-backward implementation of Problem~\eqref{eq:logit},\\[-1.5em]

\item a FBPD implementation of Problem~\eqref{eq:epigraphical_SVM} reformulated with  linear constraints\\[-0.5em]
\begin{equation*}
\minimize{({\rm x},\zeta)\in \RR^{(M+1)K}\times \RR^{LK}}\; g({\rm x}) \quad\subto\quad
\begin{cases}
&\displaystyle\sum_{\ell=1}^L\sum_{k=1}^K \zeta^{(\ell,k)} \le K \eta,\\
&(\forall \ell\in \{1,...,L\}) \quad \zeta^{(\ell,1)} = \dots = \zeta^{(\ell,K)},\\
&(\forall \ell\in \{1,...,L\}) \quad 
\zeta^{(\ell,1)}\ge 0,\dots,\zeta^{(\ell,K)} \ge 0,\\
&(\forall \ell\in \{1,...,L\}) \quad 
T_\ell \, {\rm x} + r_\ell - (\zeta^{(\ell,k)})_{1 \le k \le K} \le 0. 
\end{cases}
\end{equation*}
This approach is conceptually similar to the linear programming methods proposed by \citet{Wang2007_stat_L1MSVM} and \citet{Zhang2008_variable-selection}  for $\bell_{1}$- or  $\bell_{1,+\infty}$-regularized SVMs.

\end{itemize}
Figures~\ref{fig:times:a1}, \ref{fig:times:a2} and \ref{fig:times:a3} show the execution times (averaged among $10$ training sets) obtained by the above algorithms for various values of $\lambda$ and $\eta$ on the MNIST database with $L\in\{3K,5K,10K\}$. In this experiment, the execution times refer to a stopping criterion of $10^{-5}$ on the relative error between two consecutive iterates. 
Conversely, Figures~\ref{fig:conv:a}, \ref{fig:conv:b} and \ref{fig:conv:c} show the relative distance to $\|{\rm x}^{[i]} - {\rm x}^{[\infty]}\| / \|{\rm x}^{[\infty]}\|$ (as a function of time) for the values of $\lambda$ and $\eta$ yielding the best accuracy (as reported in Figure~\ref{fig:mnist}), where ${\rm x}^{[\infty]}$ denotes the solution computed with a stopping criterion of $10^{-5}$. These results demonstrate that the proposed algorithms are faster than the approaches based on linear constraints and logistic regression, while being comparable in terms of execution times to approaches based on the square hinge loss. In addition, Algorithm~\ref{algo:SVM_epi} turns out to converge faster than Algorithm~\ref{algo:SVM_reg}. This can be explained by the higher computational cost of the projection onto the standard simplex.
%
%

\begin{figure*}
\centering
\subfloat[$L/K=3$ (average time vs $\alpha$)]{ \includegraphics[width=0.44\textwidth]{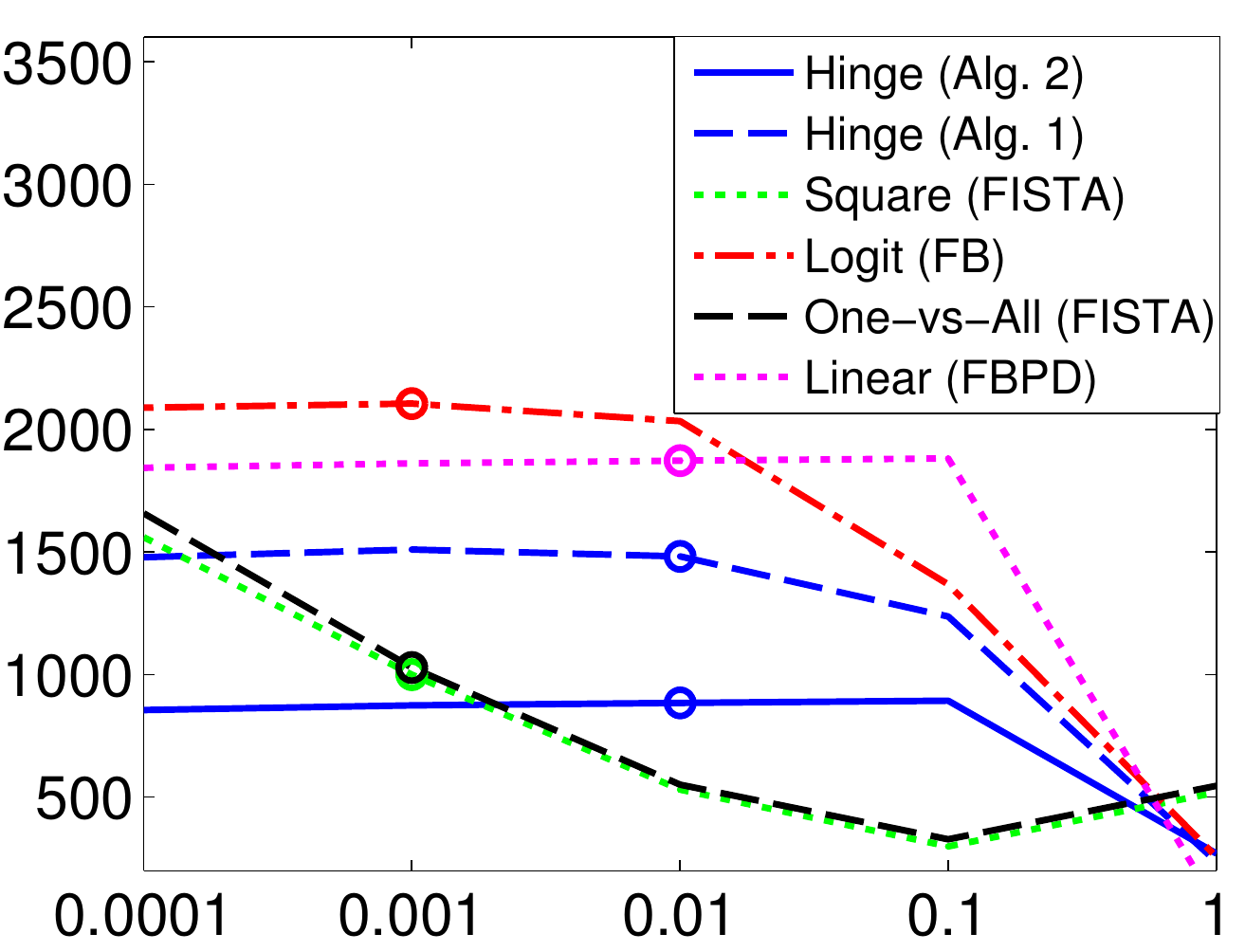}\label{fig:times:a1}}
\hfill
\subfloat[$L/K=3$ (distance vs time)]{ \includegraphics[width=0.5\textwidth]{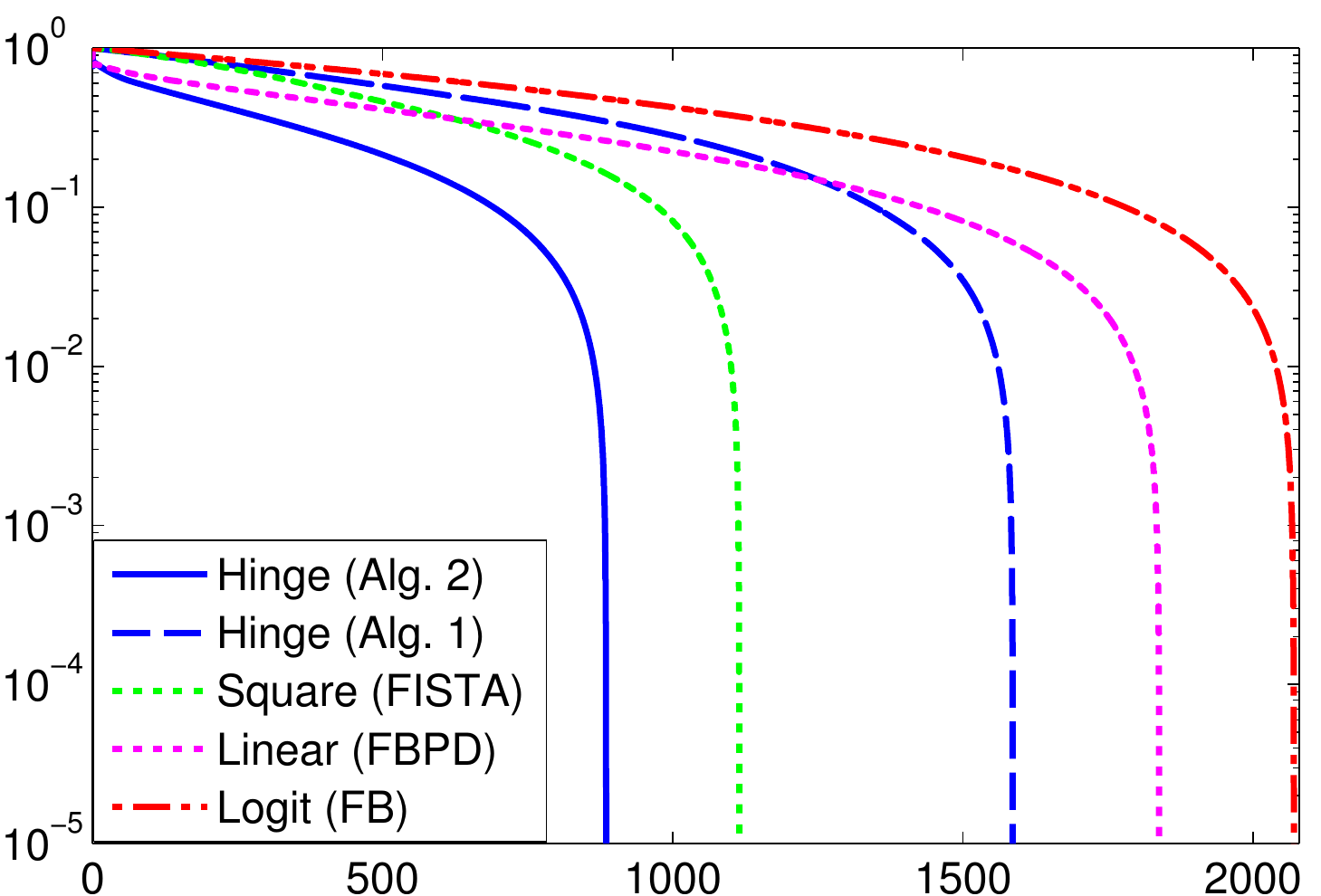}\label{fig:conv:a}}
\hfill
\subfloat[$L/K=5$ (average time vs $\alpha$)]{ \includegraphics[width=0.44\textwidth]{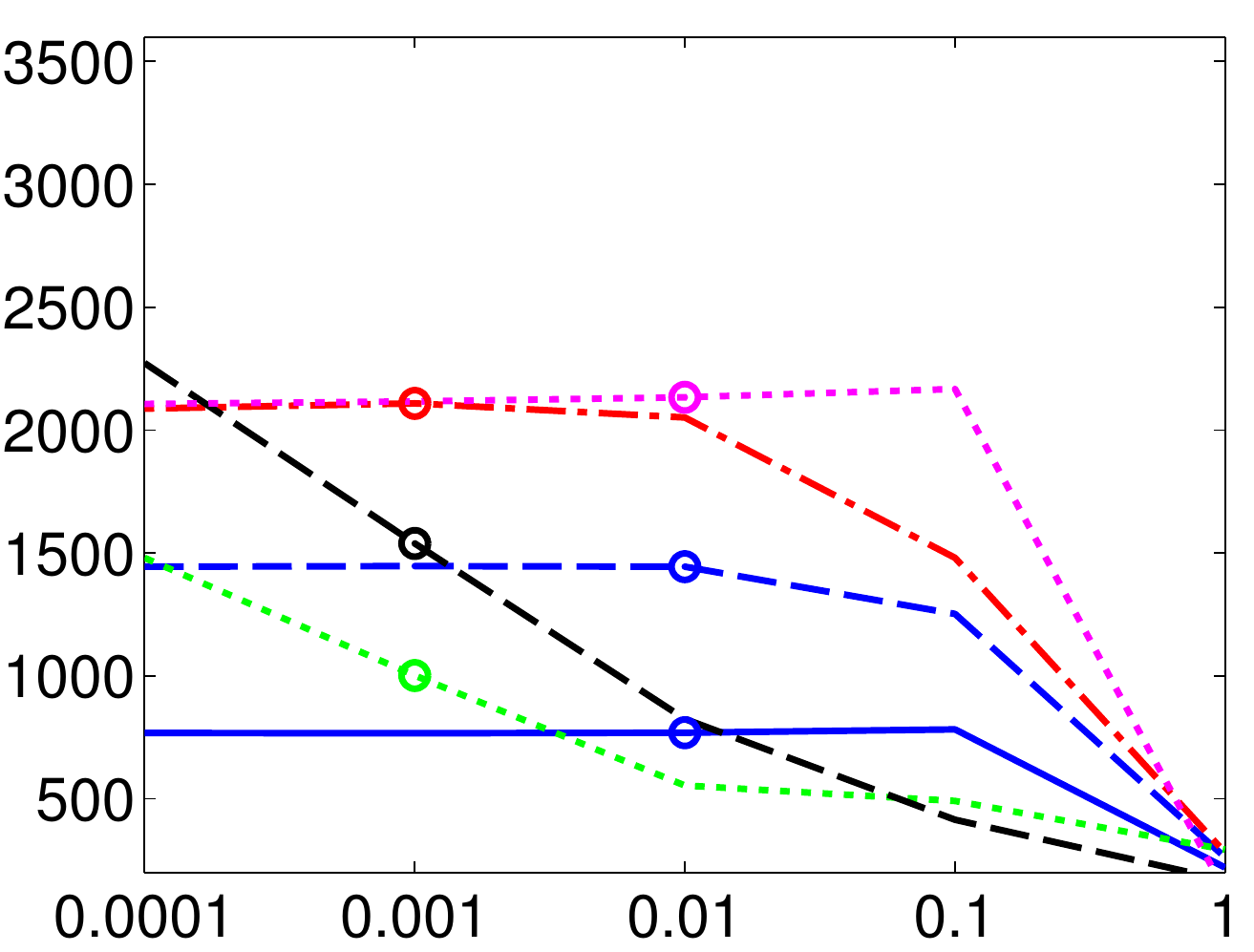}\label{fig:times:a2}}
\hfill
\subfloat[$L/K=5$ (distance vs time)]{ \includegraphics[width=0.5\textwidth]{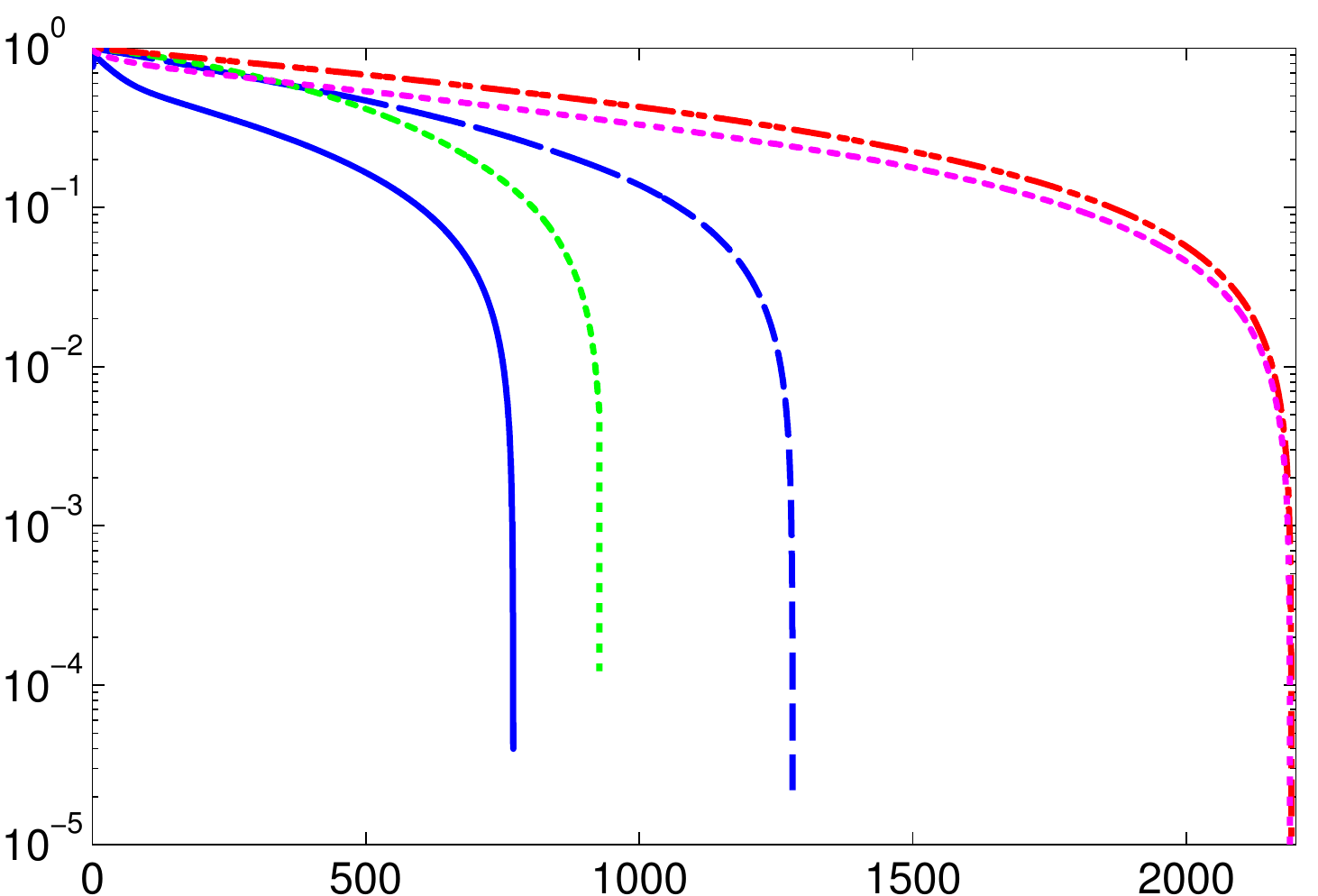}\label{fig:conv:b}}
\hfill
\subfloat[$L/K=10$ (average time vs $\alpha$)]{ \includegraphics[width=0.44\textwidth]{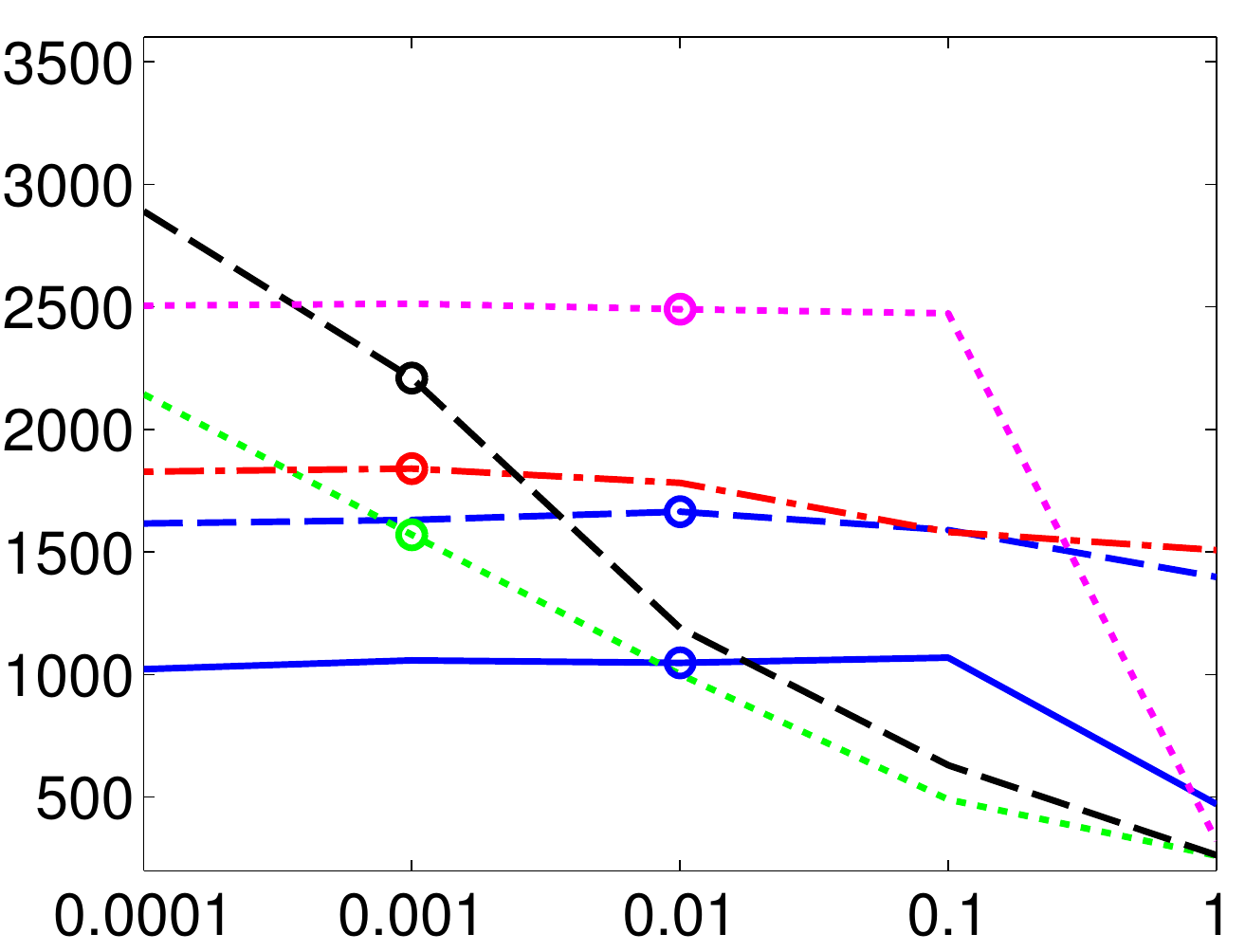}\label{fig:times:a3}}
\hfill
\subfloat[$L/K=10$ (distance vs time)]{ \includegraphics[width=0.5\textwidth]{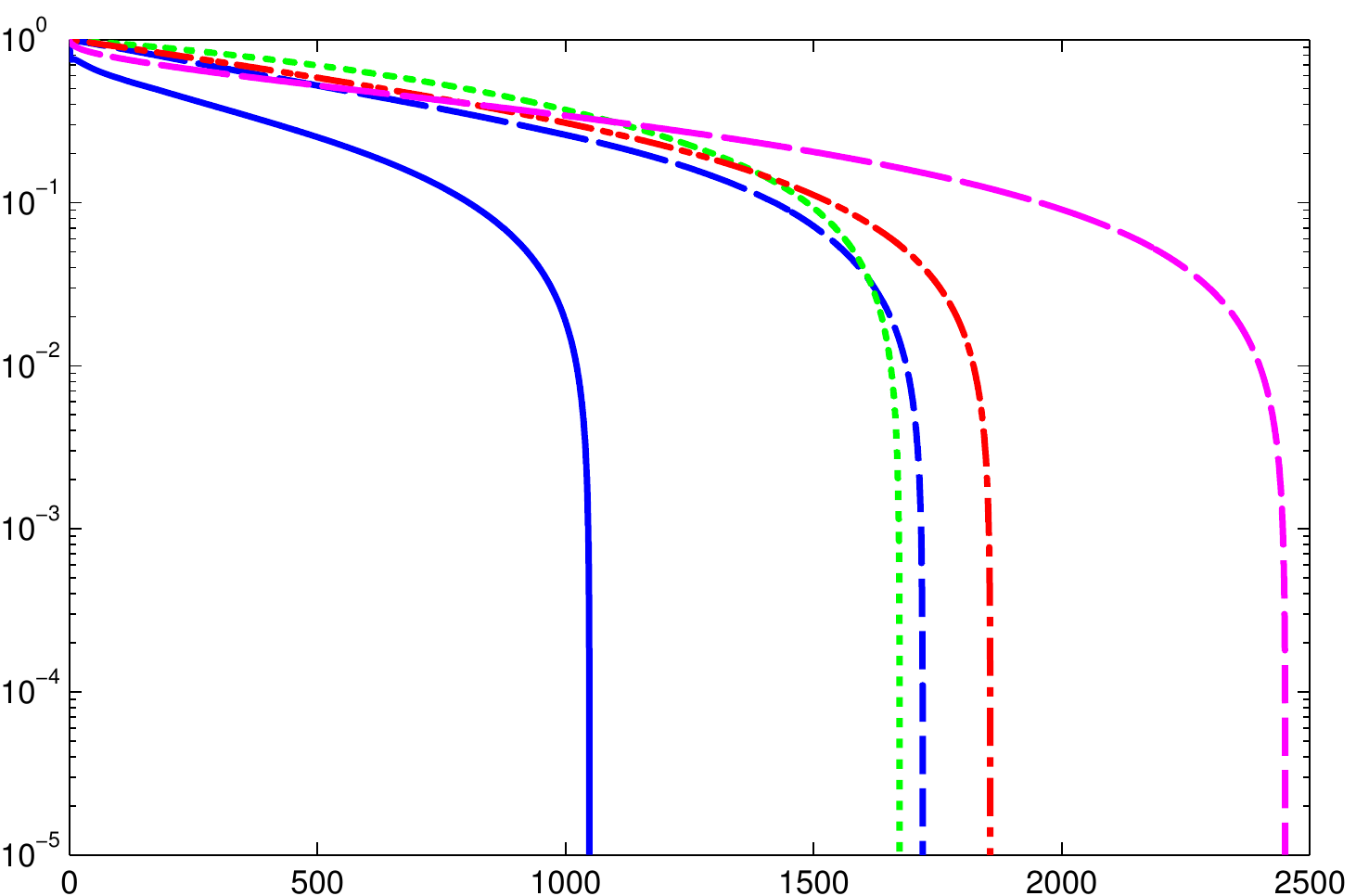}\label{fig:conv:c}}

\caption{Results on MNIST database with the $\bell_{1,\infty}$-regularization for $L\in\{3K,5K,10K\}$. Left column: execution time as a function of $\alpha$, where the circles mark the values yielding the best accuracy (as reported in Figure~\ref{fig:mnist}). Right column: distance to ${\rm x}^{[\infty]}$ (as a function of time) obtained with the values of $\alpha$ marked by a circle in the left column (note that the one-vs-all approach, being defined by multiple optimization problems, does not allow us to determine the iterate $x^{[i]}$ at each iteration, hence the associated plot cannot be traced).}
\label{fig:times}
\end{figure*}

\subsection{Quadratic regularization}
Although our emphasis is on sparse learning, we propose to complete our analysis by evaluating the efficiency of the proposed algorithms in the case when $g$ is a quadratic regularization function. To this end, we compare the execution times of Algorithms~\ref{algo:SVM_reg}~and~\ref{algo:SVM_epi} with the \emph{SVM-struct} algorithm proposed by \citet{Joachims_T_2009_j-mach-learn_cutting_pts}, which provides a numerical approach for solving Problem~\eqref{eq:slack_SVM} through a cutting-plane technique. Figure~\ref{fig:times_L2} reports the execution times (averaged on $10$ training sets) obtained by the above methods on the MNIST database with $L\in\{3K, 5K, 10K,50K,100K, 500K\}$ and different values of $\alpha$. In this experiment, we set the stopping criterion to $10^{-3}$ in all methods, and the regularization parameter of \emph{SVM-struct} to $L/\alpha$. The results show that the proposed algorithms are competitive with state-of-the-art solutions in scenarios
with a limited number of training data. The same cannot be claimed for larger databases, as \emph{SVM-struct} scales particularly well w.r.t.\ the number $M$ of features and the size $L$ of the training set. Note however that, when $L/K = 500$, the number of significant features for the SVM classifier designed with a quadratic regularization is equal to $M-546 = 158214$ (by setting a threshold to $10^{-5}$), while a sparse approach using an $\ell_{1,\infty}$-norm regularization yields only $42795$ nonzero features.

%
%

\begin{figure*}
	\centering
	\subfloat[$L/K=3$ (average time vs $\alpha$)]{ \includegraphics[width=0.45\textwidth]{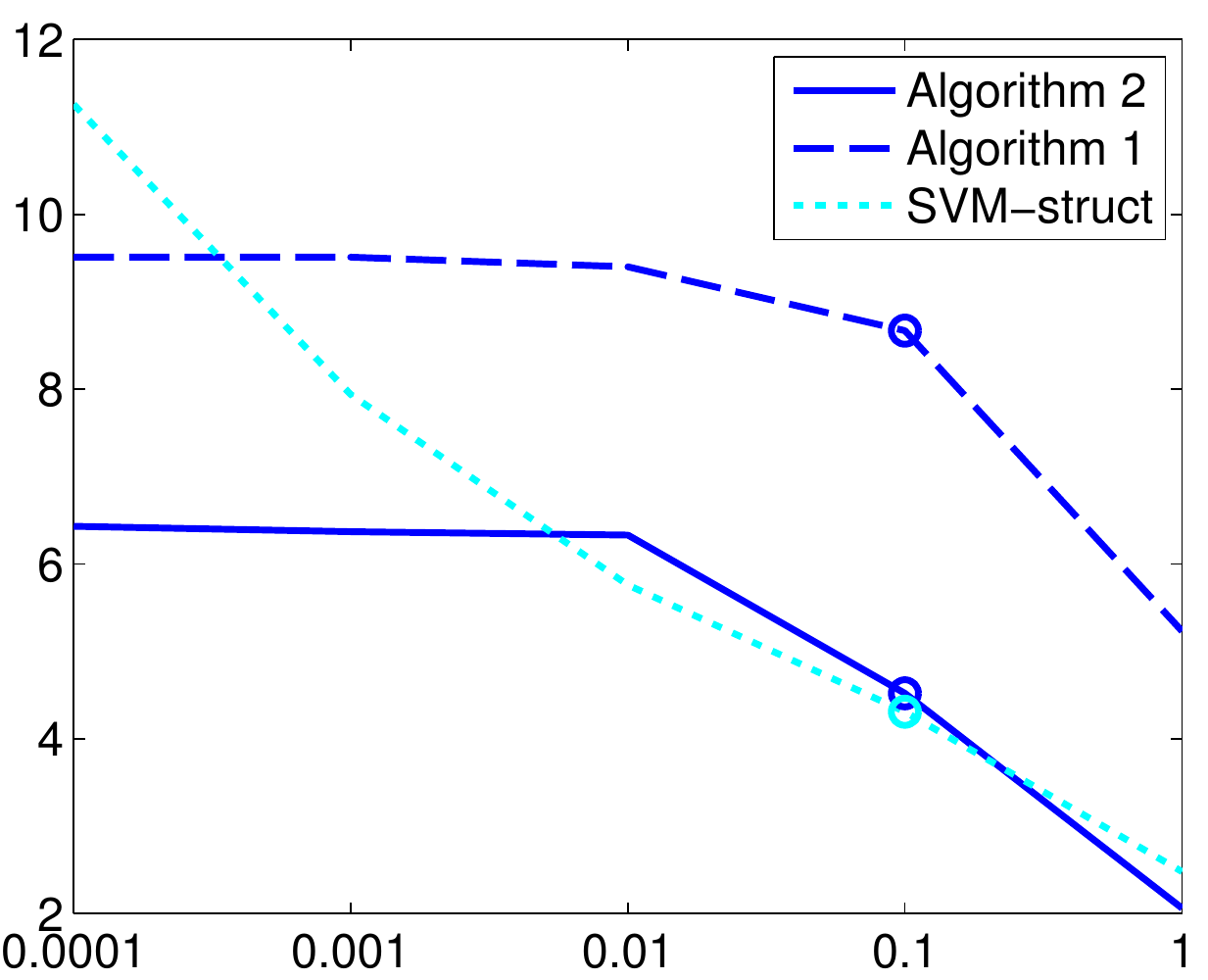}}
	\hfill
	\subfloat[$L/K=5$ (average time vs $\alpha$)]{ \includegraphics[width=0.45\textwidth]{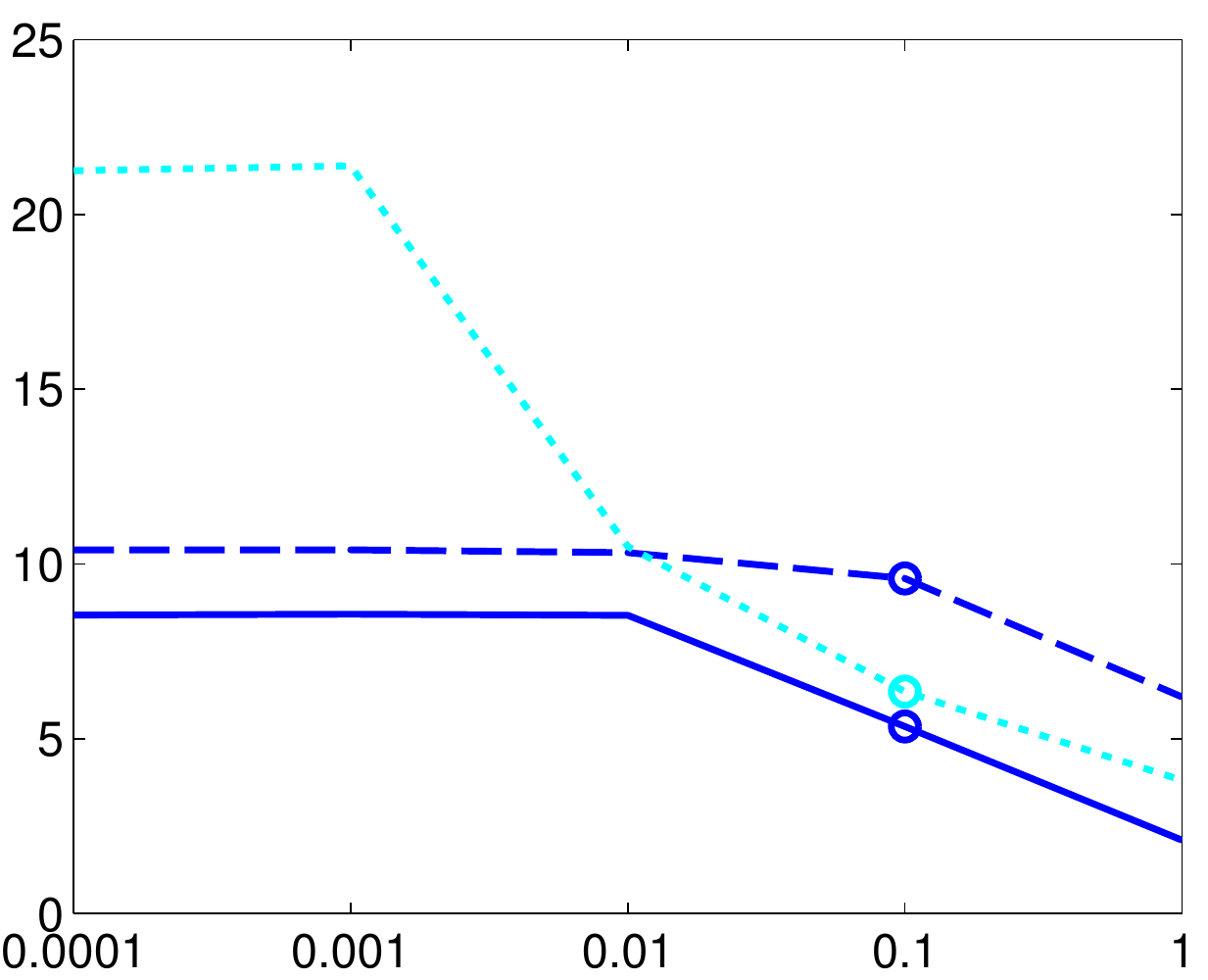}}
	\hfill
	\subfloat[$L/K=10$ (average time vs $\alpha$)]{ \includegraphics[width=0.45\textwidth]{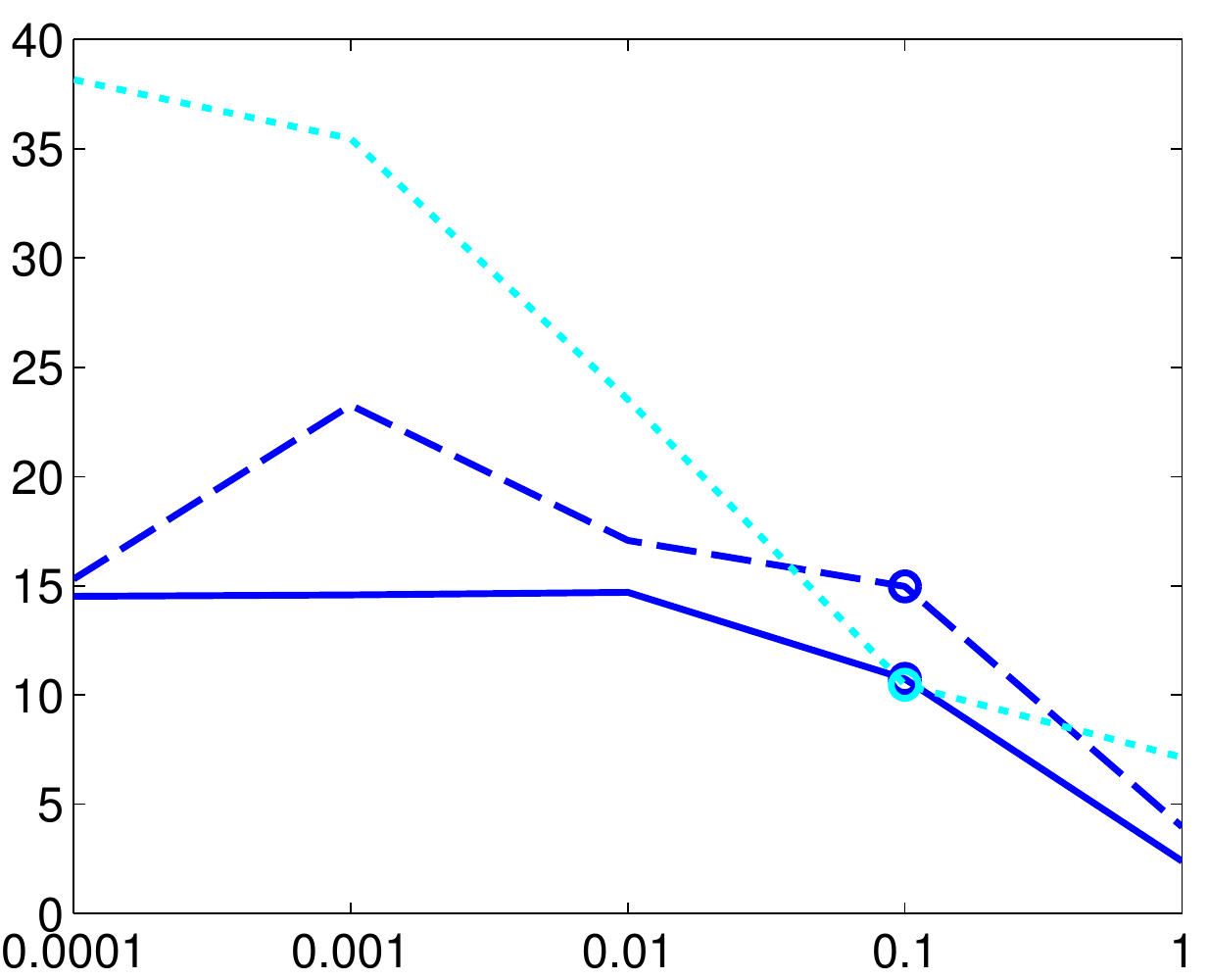}}
	\hfill
	\subfloat[$L/K=50$ (average time vs $\alpha$)]{ \includegraphics[width=0.45\textwidth]{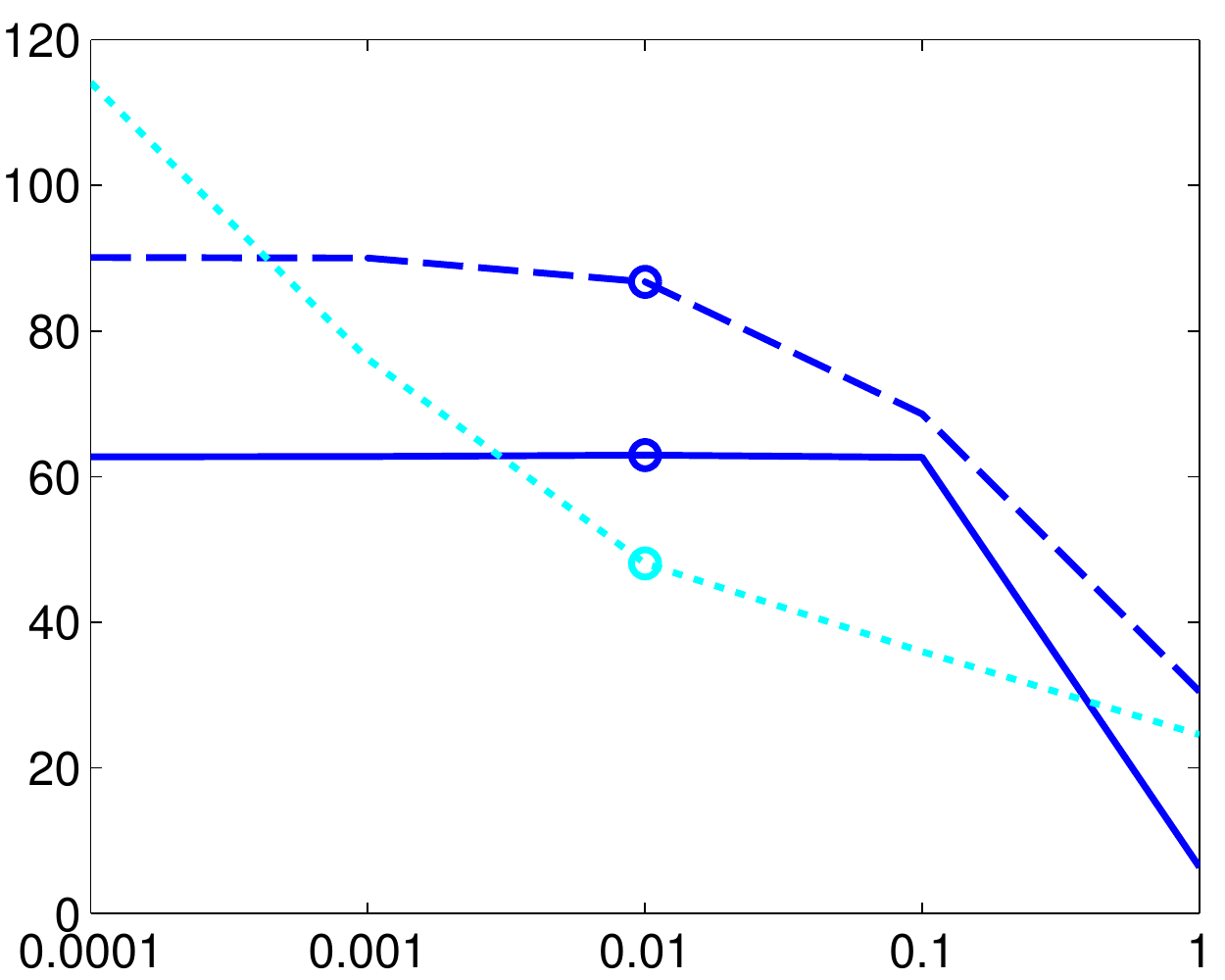}}
	\hfill
	\subfloat[$L/K=100$ (average time vs $\alpha$)]{ \includegraphics[width=0.45\textwidth]{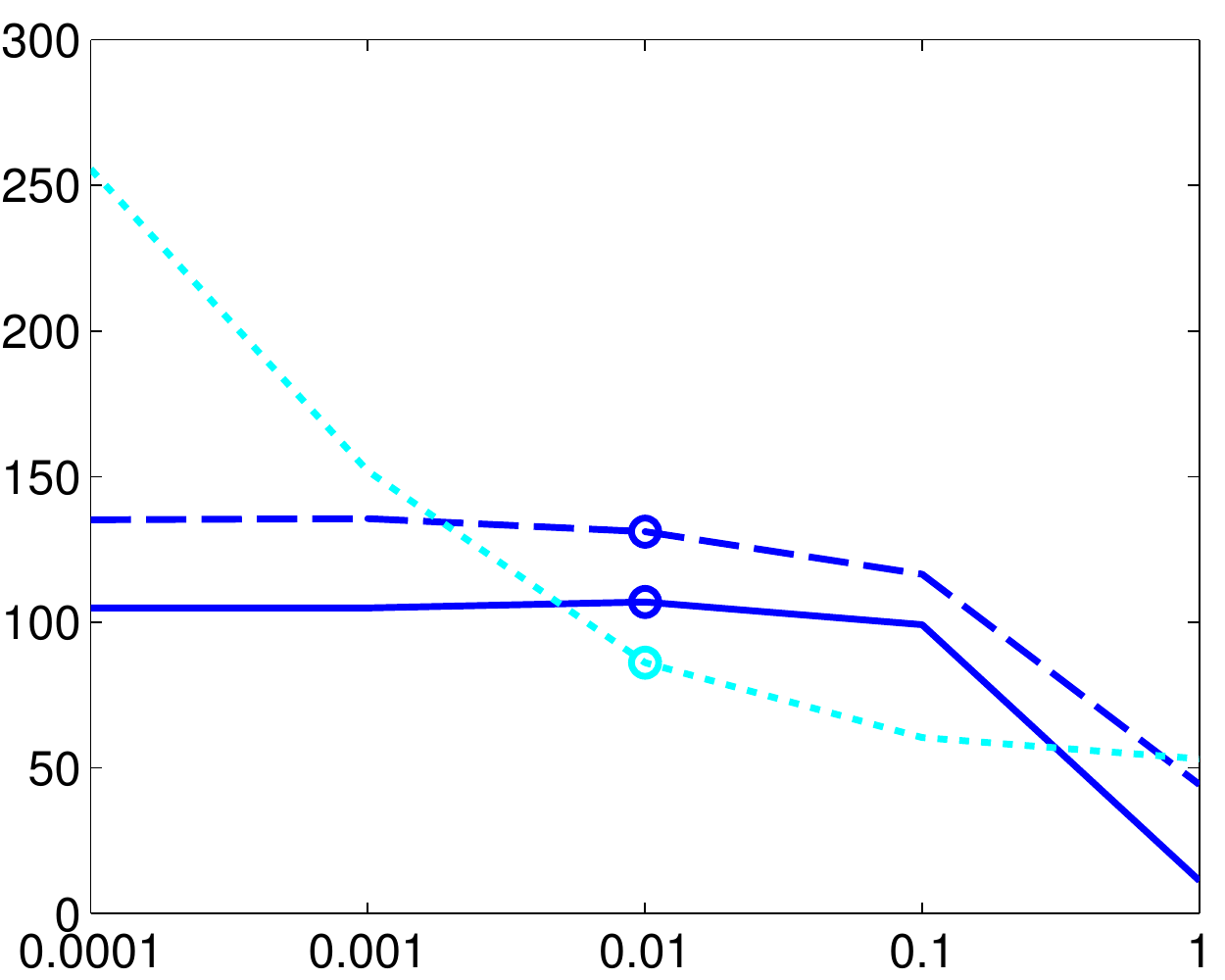}}
	\hfill
	\subfloat[$L/K=500$ (average time vs $\alpha$)]{ \includegraphics[width=0.45\textwidth]{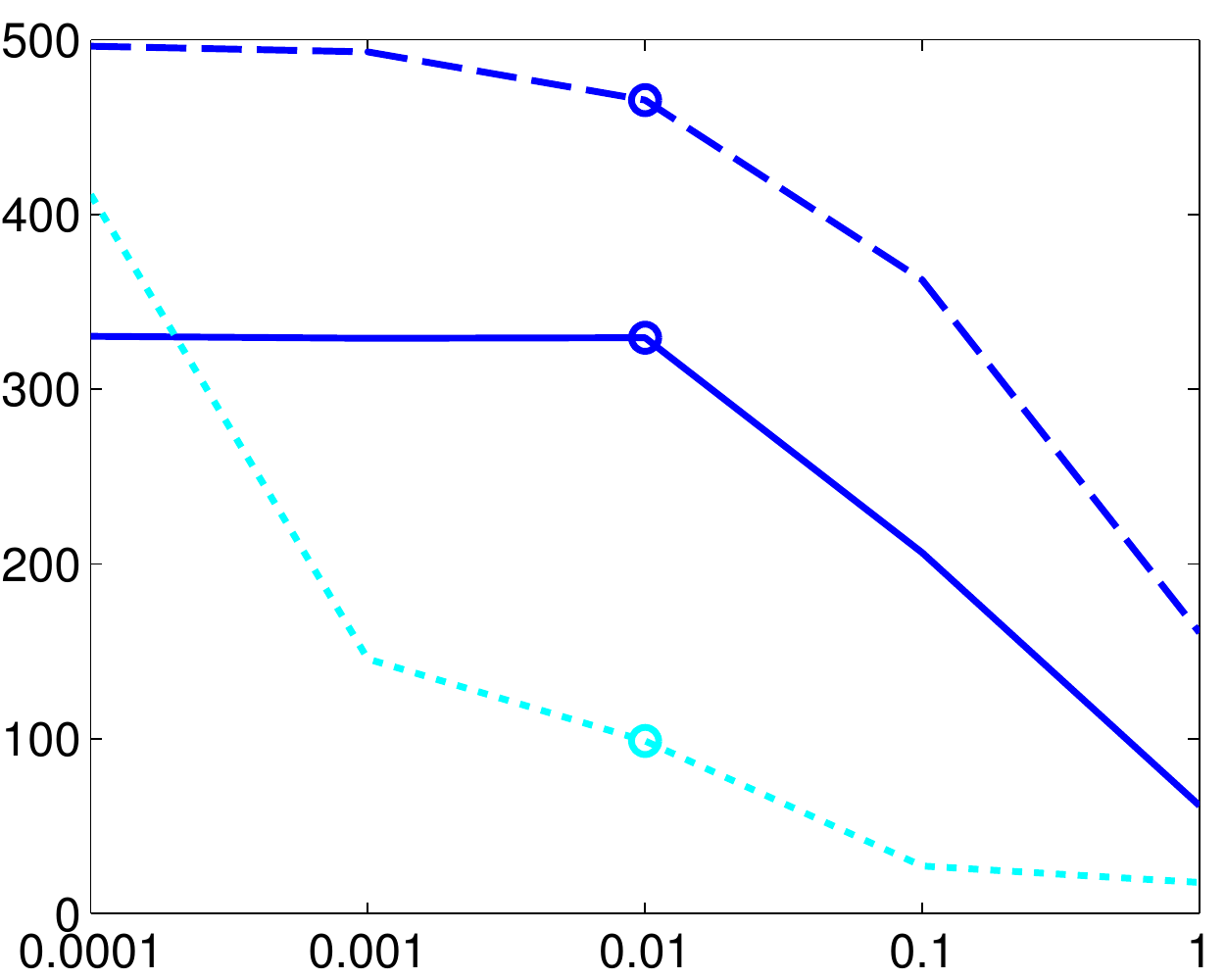}}

	\caption{Results on MNIST database with a quadratic regularization. The plots show the execution times obtained with a stopping criterion of $10^{-3}$ for some values of $\alpha$. The circles mark the values of $\alpha$ yielding the best classification accuracy.}
	\label{fig:times_L2}
\end{figure*}

\section{Conclusions}
We have proposed two efficient algorithms for learning a sparse multiclass SVM. Our approach makes it possible to minimize a criterion involving the multiclass hinge loss and a sparsity-inducing regularization. In the literature, such a criterion is typically approximated by replacing the hinge loss with a smooth penalty, such as the quadratic hinge loss or the logistic loss. In this paper, we have provided two solutions that directly deal with the hinge loss: one addressing the regularized formulation and the other one adapted to the constrained formulation. The performance of the proposed solutions have been evaluated over three databases in scenarios with a few training data. The results show that the use of the hinge loss, rather than an approximation, leads to a slightly better classification accuracy and tends to make the method more robust w.r.t. the choice of the regularization parameter, while the proposed algorithms are often faster than state-of-the-art solutions.

\bibliographystyle{IEEEbib}
\bibliography{biblio}

\end{document}